\PassOptionsToPackage{pagebackref}{hyperref}
\PassOptionsToPackage{capitalize}{cleveref}
\PassOptionsToPackage{dvipsnames}{xcolor}
\PassOptionsToPackage{ruled,linesnumbered,vlined}{algorithm2e}
\documentclass[final, nosubfloats, cleveref]{l4dc2025}

\let\qed\jmlrQED

\usepackage[utf8]{inputenc} 
\usepackage[T1]{fontenc}    
\usepackage{xurl}
\usepackage{booktabs}       
\usepackage{amsfonts}       
\usepackage{nicefrac}       
\usepackage{microtype}      
\usepackage{float}

\usepackage{mathtools}
\usepackage{thmtools}

\usepackage{bm}
\usepackage[mathcal]{euscript}
\usepackage{physics}
\usepackage{siunitx}
\usepackage[shortlabels]{enumitem}
\usepackage{aliascnt}
\usepackage{accents}
\usepackage{tikz}
\usepackage{xkcdcolors}
\usepackage{tablefootnote}
\usepackage{caption}
\usepackage{subcaption}
\usepackage{wrapfig}
\usepackage[export]{adjustbox}
\usepackage{titlesec}

\newcommand\myshade{60}
\colorlet{mylinkcolor}{YellowOrange}
\colorlet{mycitecolor}{MidnightBlue}
\colorlet{myurlcolor}{violet}
\definecolor{goodcolor}{HTML}{0ab246}

\hypersetup{
    colorlinks=true,
    linkcolor=goodcolor!\myshade!black,
    citecolor=goodcolor!\myshade!black,    
    urlcolor=goodcolor!\myshade!black,
}

\usetikzlibrary{positioning}
\usetikzlibrary{calc}

\titlespacing*{\section}
  {0pt}   
  {10pt}  
  {5pt}   

\declaretheorem[sibling=definition]{assumption}
\crefname{assumption}{Assumption}{Assumptions}

\makeatletter
\AddToHook{cmd/appendix/before}{\def\cref@section@alias{appendix}}
\makeatother

\captionsetup{labelfont=sl}
\SetAlCapHSkip{0pt}
\titlespacing*{\paragraph}{0pt}{0.5ex}{0.5em}

\DeclareMathOperator*{\argmin}{arg\,min}
\newcommand{\smax}{\bar\sigma}
\newcommand{\smin}{\underaccent{\bar}{\sigma}}
\allowdisplaybreaks
\crefformat{assa}{Assumption~#2#1a#3}
\crefformat{assb}{Assumption~#2#1b#3}
\colorlet{onno}{xkcdPeriwinkleBlue}
\colorlet{claire}{xkcdPaleGreen}

\crefname{equation}{}{}
\crefname{lemma}{Lemma}{Lemmas}


\let\norm\undefined
\DeclarePairedDelimiter\norm{\lVert}{\rVert}

\newtoggle{noappendix}
\togglefalse{noappendix}

\title[A Pontryagin Perspective on Reinforcement Learning]{A Pontryagin Perspective on Reinforcement Learning}
\usepackage{times}



\coltauthor{
  \Name{Onno Eberhard}{\normalfont\textsuperscript{12}} \Email{oeberhard@tue.mpg.de}\\
  \Name{Claire Vernade}{\normalfont\textsuperscript{2}} \Email{claire.vernade@uni-tuebingen.de}\\
  \Name{Michael Muehlebach}{\normalfont\textsuperscript{1}} \Email{michaelm@tue.mpg.de}\\
  \addr {\normalfont\textsuperscript{1}}Max Planck Institute for Intelligent Systems, Tübingen, Germany\\{\normalfont\textsuperscript{2}}University of Tübingen}

\begin{document}

\maketitle

\begin{abstract}%
  Reinforcement learning has traditionally focused on learning state-dependent policies to solve optimal control problems in a \emph{closed-loop} fashion.
  In this work, we introduce the paradigm of \emph{open-loop reinforcement learning} where a fixed action sequence is learned instead.
  We present three new algorithms: one robust model-based method and two sample-efficient model-free methods.
  Rather than basing our algorithms on Bellman's equation from dynamic programming, our work builds on \emph{Pontryagin's principle} from the theory of open-loop optimal control.
  We provide convergence guarantees and evaluate all methods empirically on a pendulum swing-up task, as well as on two high-dimensional MuJoCo tasks, significantly outperforming existing baselines.\looseness=-1
\end{abstract}

\begin{keywords}%
  Reinforcement learning, Open-loop control, Non-convex optimization
\end{keywords}

\section{Introduction}
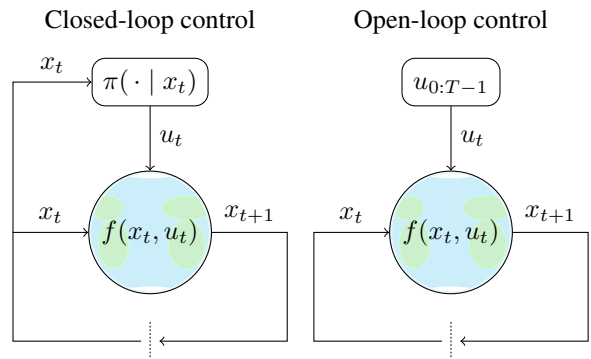
\begin{wrapfigure}{r}{0.6\textwidth}
  \vspace{-4em}
  \begin{center}
    \resizebox{0.95\linewidth}{!}{\definecolor{neptune3}{rgb}{.53, .85, .95}%
\newcommand{\earth}{
    \clip (0, 0) circle (1);

    \draw [fill=cyan] (0,0) circle (1);
    
    \draw [very thin, green, fill=green!70!black] (0.6, 0.4) circle (.4 and .2) ;
    \draw [very thin, green, fill=green!70!black] (0.6, -.2) circle (.3 and .4) ;
    
    \draw [very thin, green, fill=green!70!black] (-.7, 0.4) circle (.25 and .2) ;
    \draw [very thin, green, fill=green!70!black] (-.6, -.2) circle (.25 and .4) ;
    
    \fill [white] (0.38268343236508984, 0.9238795325112867) arc (67.5 : 112.5 : 1) arc (-(90 + 22.06219115754148 / 2) : -(90 - 22.06219115754148 / 2) : 2) -- cycle ;
    \fill [white] (0.38268343236508984, -0.9238795325112867) arc (-67.5 : -112.5 : 1) arc ((90 + 22.06219115754148 / 2) : (90 - 22.06219115754148 / 2) : 2) -- cycle ;
    
    \draw [neptune3, ultra thick] (0,0) circle (1);
}%
\begin{tikzpicture}[line cap=round]
    \node[circle] (env) {$\phantom{f(x_t, u_t)}$};
    \node[draw, rectangle, inner sep=4, above=2 of env.center, anchor=center, rounded corners=5pt] (pi) {$\pi(\,\cdot\mid x_t)$};
    \begin{scope}[shift={(env)}, scale=.88]
        \earth
    \end{scope}
    \node[draw, circle, fill=white, fill opacity=0.8, text opacity=1] (f) at (env) {$f(x_t, u_t)$};
    \draw[->] (pi.south) -- node[right]{$u_t$} (env.north);
    \node[below=0.5 of env] (dash) {};
    \draw[densely dotted] ($(dash) + (0, 0.25)$) -- ($(dash) - (0, 0.25)$);
    \draw[->] (env.east) -- node[above]{$x_{t+1}$} ++(1, 0) |- (dash);
    \draw[->] (dash) -| ($(env.west) - (1, 0)$) -- node[above]{$x_t$} (env.west);
    \path let \p1 = (pi), \p2 = ($(env.west) - (1, 0)$) in node[inner sep=0, outer sep=0] (p) at (\x2,\y1) {};
    \draw[->] ($(env.west) - (1, 0)$) |- (p.center) -- node[above]{$x_t$} (pi.west);

    \node[circle, right=4 of env.center, anchor=center] (env2) {$\phantom{f(x_t, u_t)}$};
    \node[draw, rectangle, inner sep=4, right=4 of pi.center, anchor=center, rounded corners=5pt] (pi2) {$\phantom{\pi(i\mid i)}$};
    \node[anchor=base] at (pi2.base) {$u_{0:T-1}$};
    \begin{scope}[shift={(env2)}, scale=.88]
        \earth
    \end{scope}
    \node[draw, circle, fill=white, fill opacity=0.8, text opacity=1] (f2) at (env2) {$f(x_t, u_t)$};
    \draw[->] (pi2.south) -- node[right]{$u_t$} (env2.north);
    \node[below=0.5 of env2] (dash2) {};
    \draw[densely dotted] ($(dash2) + (0, 0.25)$) -- ($(dash2) - (0, 0.25)$);
    \draw[->] (env2.east) -- node[above]{$x_{t+1}$} ++(1, 0) |- (dash2);
    \draw[->] (dash2) -| ($(env2.west) - (1, 0)$) -- node[above]{$x_t$} (env2.west);
\end{tikzpicture}
  \end{center}
  \vspace{-1em}
  \caption{Comparison of closed-loop (left) and open-loop (right) control. In closed-loop RL, the goal is to learn a policy $\pi$. In open-loop RL, a fixed sequence of actions $u_{0:T-1}$ is learned instead, with $u_t$ independent of the states $x_{0:t}$.}
  \label{fig:loop}
  \vspace{-1.2em}
\end{wrapfigure}

Reinforcement learning (RL) refers to ``the optimal control of incompletely-known Markov decision processes'' \citep[p.~2]{sutton2018reinforcement}.
It has traditionally focused on applying dynamic programming algorithms, such as value iteration or policy iteration, to situations where the environment is unknown.
These methods solve optimal control problems in a closed-loop fashion by learning feedback policies, which map states ($x_t$) to actions ($u_t$).
In contrast, this work introduces the paradigm of \emph{open-loop reinforcement learning} (OLRL), in which fixed action sequences $u_{0:T-1}$, over a horizon $T$, are learned instead.
The closed-loop and open-loop control paradigms are illustrated in \cref{fig:loop}.

An open-loop controller receives no observations from its environment.
This makes it impossible to react to unpredictable events, which is essential in many problems, particularly those with stochastic or unstable dynamics.
For this reason, RL research has historically focused exclusively on closed-loop control.
However, many environments are perfectly predictable.
Consider the classic example of swinging up an inverted pendulum.
If there are no disturbances, then this task can be solved flawlessly without feedback (as we demonstrate in \cref{sec:pendulum}).
Where open-loop control is viable, it brings considerable benefits.
As there is no need for sensors, it is generally much cheaper than closed-loop control.
It can also operate at much higher frequencies, since there is no bandwidth bottleneck due to sensor delays or computational processing of measurements.
Importantly, the open-loop optimal control problem is much simpler, as it only involves optimizing an action sequence (finding one action per time step).
In contrast, closed-loop optimal control involves optimizing a policy (finding one action for each state of the system), which can be considerably more expensive. 
In this way, open-loop control circumvents the curse of dimensionality without requiring function approximation.\looseness=-1

For these reasons, open-loop control is widely used in practice \citep{diehl2006,Zundert2018,sferrazza2020}, and there exists a large body of literature on the theory of open-loop optimal control \citep{pontryagin-1962-dynamic}.
However, the setting of incompletely-known dynamics has received only little attention.
In this work, we introduce a family of three new open-loop RL algorithms by adapting the existing theory to this setting.
Whereas closed-loop RL is largely based on approximating the Bellman equation, the central equation of dynamic programming, we base our algorithms on approximations of \emph{Pontryagin's principle}, the central equation of open-loop optimal control.
We first introduce a model-based method whose convergence we prove to be robust to modeling errors.
This is a novel and non-standard result which depends on a careful analysis of the algorithm.
We then extend this procedure to settings with completely unknown dynamics and propose two fully online model-free methods.
Finally, we empirically demonstrate the robustness and sample efficiency of our methods on an inverted pendulum swing-up task and on two complex MuJoCo tasks.
\looseness=-1

\paragraph{Related work.}
Our work is inspired by numerical optimal control theory \citep{betts2010practical,geering2007}, which deals with the numerical solution of trajectory optimization problems.
Whereas existing methods assume that the dynamics are known, our algorithms only require an approximate model (model-based OLRL) or no model at all (model-free OLRL), and rely on a simulator to provide samples.
An in-depth review of related work can be found in \cref{app:related-work}.
\iftoggle{noappendix}{
All appendices are contained in the full version of this paper \citep{eberhard-2024-pontryagin}.
}

\section{Background}\label{sec:background}
We consider a reinforcement learning setup with continuous state and action spaces $\mathcal X \subset \mathbb R^D$ and $\mathcal U \subset \mathbb R^K$.
Each episode lasts $T$ steps, starts in the fixed initial state $x_0$, and follows the deterministic dynamics $f: \mathcal X \times \mathcal U \to \mathcal X$, such that $x_{t+1} = f(x_t, u_t)$ for all times $t \in [T-1]_0$.\footnote{For $n \in \mathbb N$, we write $[n] \doteq \{1, 2, \dots, n\}$ and $[n]_0 \doteq \{0, 1, \dots, n\}$. Unless explicitly mentioned, all time-dependent equations hold for all $t \in [T-1]_0$.}
After every transition, a deterministic reward $r(x_t, u_t) \in \mathbb R$ is received, and at the end of an episode, an additional terminal reward $r_T(x_T) \in \mathbb R$ is computed.
The value of state $x_t$ at time $t$ is the sum of future rewards\looseness=-1
\begingroup
\abovedisplayskip=4pt
\belowdisplayskip=4pt
\begin{equation*}
    v_t(x_t; u_{t:T-1}) \doteq \sum_{\tau=t}^{T-1} r(x_\tau, u_\tau) + r_T(x_T) = r(x_t, u_t) + v_{t+1}\{f(x_t, u_t); u_{t+1:T-1}\}\text,
\end{equation*}
\endgroup
where we defined $v_T$ as the terminal reward function $r_T$.
Our goal is to find a sequence of actions $u_{0:T-1} \in \mathcal U^T$ maximizing the total sum of rewards $J(u_{0:T-1}) \doteq v_0(x_0; u_{0:T-1})$.
We will tackle this trajectory optimization problem using gradient ascent.
Although our goal is to learn an open-loop controller (an action sequence), we assume that the state is fully observed during the training process.

\paragraph{Pontryagin's principle.}
The gradient of the objective function $J$ with respect to the action $u_t$ is
\begin{equation}\label{eq:gradient}
  \nabla_{u_t}J(u_{0:T-1}) = \nabla_u r(x_t, u_t)
  + \nabla_u f(x_t, u_t) \underbrace{\nabla_x v_{t+1}(x_{t+1}; u_{t+1:T-1})}_{\lambda_{t + 1} \in \mathbb R^D}\text,
\end{equation}
where the terms of $J$ related to the earlier time steps $\tau \in [t - 1]_0$ vanish, as they do not depend on $u_t$.
We denote Jacobians as $(\nabla_y f)_{i, j} \doteq \frac{\partial f_j}{\partial y_i}$.
The \emph{costates} $\lambda_{1:T}$ are defined as the gradients of the value function along the given trajectory.
They can be computed through a backward recursion:
\begin{align}
  \lambda_T &\doteq \nabla v_T(x_T) = \nabla r_T(x_T)\label{eq:costate1}\\
  \lambda_t &\doteq \nabla_x v_t(x_t; u_{t:T-1}) = \nabla_x r(x_t, u_t) + \nabla_x f(x_t, u_t) \lambda_{t + 1}.\label{eq:costate2}
\end{align}
The gradient \cref{eq:gradient} of the objective function can thus be obtained by means of one forward pass through the dynamics $f$ (a rollout), yielding the states $x_{0:T}$, and one backward pass through \cref{eq:costate1,eq:costate2}, yielding the costates $\lambda_{1:T}$.
The stationarity condition arising from setting \cref{eq:gradient} to zero, where the costates are computed from \cref{eq:costate1,eq:costate2}, is known as \emph{Pontryagin's principle}.
(Pontryagin's principle in fact goes much further than this, as it generalizes to infinite-dimensional and constrained settings.)
We re-derive \cref{eq:gradient,eq:costate1,eq:costate2} using the method of Lagrange multipliers in \cref{app:lagrange}.

\section{Method}\label{sec:method}
If the dynamics are known, then the trajectory can be optimized by performing gradient ascent with the gradients computed according to Pontryagin's equations \labelcref{eq:gradient,eq:costate1,eq:costate2}.
In this work, we adapt this idea to the domain of reinforcement learning, where the dynamics are unknown.
In RL, we are able to interact with the environment, so the forward pass through the dynamics $f$ is not an issue.
However, the gradient computation according to Pontryagin's principle requires the Jacobians $\nabla_x f_t \doteq \nabla_x f(x_t, u_t)$ and $\nabla_u f_t \doteq \nabla_u f(x_t, u_t)$ of the unknown dynamics.
In our methods, which follow the structure of \cref{alg:pontryagin}, we therefore replace these Jacobians by estimates $A_t \simeq \nabla_x f_t$ and $B_t \simeq \nabla_u f_t$.
Before discussing concrete methods for open-loop RL, whose main concern is the construction of appropriate estimates $A_t$ and $B_t$, we first show that replacing $\nabla_x f_t$ and $\nabla_u f_t$ in this way is sensible.
In particular, we show that, under certain assumptions on the accuracy of $A_t$ and $B_t$, \cref{alg:pontryagin} converges to an unbiased local optimum of the true objective $J$.
In the following sections we then discuss model-based and model-free open-loop RL methods.\looseness=-1

\subsection{Convergence of \texorpdfstring{\cref{alg:pontryagin}}{Algorithm~\ref{alg:pontryagin}}}\label{sec:theorem}
\begin{wrapfigure}[13]{r}{0.58\textwidth}
  \vspace{-4.2em}\hfill
  \begin{minipage}{.95\linewidth}
  \begin{algorithm2e}[H]
    \caption{Open-loop reinforcement learning}\label{alg:pontryagin}
    \KwIn{Optimization steps $N \in \mathbb N$, step size $\eta > 0$}
    Initialize $u_{0:T-1}$ (initial action sequence)\\
    \For{$k = 1, 2, \dots, N$}{
        $x_{0:T} \gets \operatorname{rollout}(u_{0:T-1})$\hspace{1em}\tcp{Forw.\ pass}\label{line:forward}
        \BlankLine
        \tcp{Backward pass}
        $\tilde\lambda_T \gets \nabla r_T(x_T)$\\
        \For{$t = T-1, T-2, \dots, 0$}{
            \tcp{Jacobian estimation}
            $A_t, B_t \simeq \nabla_x f(x_t, u_t), \nabla_u f(x_t, u_t)$\label{line:jacobians}\\
            \BlankLine
            \tcp{Pontryagin update}
            $\tilde\lambda_t \gets \nabla_x r(x_t, u_t) + A_t\tilde\lambda_{t+1}$\\
            $g_t \gets \nabla_u r(x_t, u_t) + B_t\tilde\lambda_{t+1}$\\
            $u_t \gets u_t + \eta g_t$\hspace{1em}\tcp{Grad.\ ascent}
        }
    }
  \end{algorithm2e}
  \end{minipage}
\end{wrapfigure}
Our convergence result relies on the following three assumptions.
\begin{assumption}\label{ass:reward}
    All rewards are encoded in the terminal reward $r_T$. In other words, $r(x, u) = 0$ for all \(x \in \mathcal X\) and \(u \in \mathcal U\).
\end{assumption}
This assumption is without loss of generality, since we can augment the state $x_t$ by a single real variable $\rho_t$ that captures the sum of the running rewards (i.e., $\rho_0 = 0$ and $\rho_{t+1} = \rho_t + r(x_t, u_t)$).
An equivalent setup that satisfies \cref{ass:reward} is then obtained by defining a new terminal reward function $r'_T(x_T, \rho_T) \doteq r_T(x_T) + \rho_T$ and setting the running rewards $r'$ to zero.\looseness=-1

\begin{assumption}\label{ass:error}\label[assa]{ass:xerror}\label[assb]{ass:uerror}
    There exist constants $\gamma, \zeta > 0$ with $\gamma + \zeta + \gamma\zeta < 1$ such that for any trajectory $(u_{0:T-1}, x_{0:T})$ encountered by \cref{alg:pontryagin}, the following properties hold for all $t \in [T-1]_0$:
    \begin{enumerate}[(a),topsep=5pt,parsep=5pt]
        \item The error of $A_{t+s}$ is bounded, for all $s \in [T - t]$, in the following way:
        \begin{equation*}
            \norm{A_{t+s} - \nabla_x f_{t+s}} \leq \\\frac{\gamma}{3^s} \frac{\smin(\nabla_u f_t)}{\smax(\nabla_u f_t)}\left\{\prod_{i=1}^{s-1}\frac{\smin(\nabla_x f_{t+i})}{\smax(\nabla_x f_{t+i})}\right\}\smin(\nabla_x f_{t+s}).
        \end{equation*}
        \item The error of $B_t$ is bounded in the following way: $\norm{B_t - \nabla_u f_t} \leq \zeta \smin(\nabla_u f_{t})$.
    \end{enumerate}
    Here, $\smin(A)$ and $\smax(A)$ denote the minimum and maximum singular value of $A$, and $\norm{A} \doteq \smax(A)$.\vspace{-0.5em}
\end{assumption}
This assumption restricts the errors of the estimates $A_t$ and $B_t$ that are used in place of the true Jacobians $\nabla_x f_t$ and $\nabla_u f_t$ in \cref{alg:pontryagin}.
Although the use of the true system for collecting rollouts prevents a buildup of error in the forward pass, any error in the approximate costate $\tilde\lambda_t$ can still be amplified by the Jacobian estimates of earlier time steps, $A_\tau$ for $\tau \in [t - 1]$, during the backward pass.
Thus, to ensure convergence to a stationary point of the objective function $J$, the errors of these estimates need to be small.
This is particularly important for $t$ close to $T$, as these errors will be amplified over more time steps.
\Cref{ass:error} provides a quantitative characterization of this intuition.\looseness=-1\vspace{-0.5em}
\begin{assumption}\label{ass:smooth}
     There exists a constant $L > 0$ such that, for all action sequences $u_{0:T-1}^A, u_{0:T-1}^B \in \mathcal U^T$ and all times $t \in [T - 1]_0$, $\norm{\nabla_{u_t} J(u_{0:T-1}^A) - \nabla_{u_t} J(u_{0:T-1}^B)} \leq L \norm{u_t^A - u_t^B}$.
\end{assumption}\vspace{-0.5em}
This final assumption states that the objective function $J$ is $L$-smooth with respect to the action $u_t$ at each time step $t \in [T-1]_0$, which is a standard assumption in nonconvex optimization.
This implies that the dynamics $f$ are smooth as well.
We are now ready to state the convergence result.\vspace{-0.5em}
\begin{theorem}\label{thm:main}
Suppose \cref{ass:reward,ass:error,ass:smooth} hold with $\gamma$, $\zeta$, and $L$.
Let $\mu \doteq 1 - \gamma - \zeta - \gamma\zeta$ and $\nu \doteq 1 + \gamma + \zeta + \gamma\zeta$.
If the step size $\eta$ is chosen small enough such that $\alpha \doteq \mu - \frac{1}{2}\eta L \nu^2$ is positive, then the iterates $(u_{0:T-1}^{(k)})_{k=0}^{N - 1}$ of \cref{alg:pontryagin} satisfy, for all $N \in \mathbb N$ and $t \in [T-1]_0$,
\begin{equation*}
    \frac{1}{N}\sum_{k=0}^{N - 1} \norm{\nabla_{u_t} J(u_{0:T-1}^{(k)})}^2 \leq \frac{J^\star - J(u_{0:T-1}^{(0)})}{\alpha \eta N}\text,
\end{equation*}
where $J^\star \doteq \sup_{u\in \mathcal U^T}J(u)$ is the optimal value of the initial state.\vspace{-0.5em}
\end{theorem}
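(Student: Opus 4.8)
The plan is to show that the search direction $g_t$ computed by \cref{alg:pontryagin} is a sufficiently accurate surrogate for the true gradient $\nabla_{u_t}J$, and then to run the standard nonconvex analysis of gradient ascent with an inexact gradient. I would start with three reductions. By \cref{ass:reward} the running rewards vanish, so \cref{eq:costate1,eq:costate2} collapse to $\lambda_T=\tilde\lambda_T=\nabla r_T(x_T)$, $\lambda_t=\nabla_x f_t\,\lambda_{t+1}$ and $\tilde\lambda_t=A_t\tilde\lambda_{t+1}$, while \cref{eq:gradient} becomes $\nabla_{u_t}J=\nabla_u f_t\,\lambda_{t+1}$ and the algorithm computes $g_t=B_t\tilde\lambda_{t+1}$. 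Next, I would observe that within a single sweep $k$ every quantity that is computed ($x_{0:T}$, the estimates $A_t,B_t$, the costates $\tilde\lambda_t$, and the directions $g_t$) depends only on the iterate $u^{(k)}_{0:T-1}$ and on the Jacobians of $f$ along that iterate's trajectory, because each action is read off to build its Jacobian estimate before it is overwritten by the ascent step; hence sweep $k$ realizes exactly the update $u^{(k+1)}_{0:T-1}=u^{(k)}_{0:T-1}+\eta\,g^{(k)}$, where $g^{(k)}=(g_0,\dots,g_{T-1})$ is the approximate gradient evaluated at $u^{(k)}$. Finally, I treat the Jacobians $\nabla_x f_t$ as invertible and use $\norm{\nabla_u f_t\,v}\ge\smin(\nabla_u f_t)\norm{v}$ for all $v$ (both implicit in \cref{ass:error}), which give $\smin(\nabla_x f_t)\norm{\lambda_{t+1}}\le\norm{\lambda_t}\le\smax(\nabla_x f_t)\norm{\lambda_{t+1}}$ and $\norm{\nabla_{u_t}J}\ge\smin(\nabla_u f_t)\norm{\lambda_{t+1}}$.

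The core step is the costate-error estimate
\begin{equation*}
  \norm{\tilde\lambda_{t+1}-\lambda_{t+1}}\;\le\;\gamma\,\frac{\smin(\nabla_u f_t)}{\smax(\nabla_u f_t)}\,\norm{\lambda_{t+1}}\qquad\text{for all }t\in[T-1]_0.
\end{equation*}
To prove it I would unfold the backward recursions into products, $\tilde\lambda_{t+1}=A_{t+1}\cdots A_{T-1}\lambda_T$ and $\lambda_{t+1}=\nabla_x f_{t+1}\cdots\nabla_x f_{T-1}\lambda_T$, and use the telescoping identity to write their difference as $\sum_{k=t+1}^{T-1}(A_{t+1}\cdots A_{k-1})\,(A_k-\nabla_x f_k)\,\lambda_{k+1}$. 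For the $k$-th summand I would bound $\norm{A_j}\le\tfrac{4}{3}\smax(\nabla_x f_j)$ (which follows from \cref{ass:error}(a) with $s=1$ together with $\gamma<1$), use $\norm{\lambda_{k+1}}\le\norm{\lambda_{t+1}}/\prod_{j=t+1}^{k}\smin(\nabla_x f_j)$, and apply \cref{ass:error}(a) with reference time $t$ and $s=k-t$ to $\norm{A_k-\nabla_x f_k}$. The exact shape of the bound in \cref{ass:error}(a) — the factor $3^{-s}$ and the telescoping product of singular-value ratios — is engineered precisely so that all the singular-value factors cancel, leaving the $k$-th summand at most $\tfrac{3}{4}(4/9)^{k-t}\,\gamma\,\tfrac{\smin(\nabla_u f_t)}{\smax(\nabla_u f_t)}\,\norm{\lambda_{t+1}}$; summing the geometric series over $k$ then produces a prefactor strictly below $1$, which is more than the claimed bound. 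This cancellation — keeping the index bookkeeping straight and confirming that \cref{ass:error}(a) is exactly tight enough to power it — is the step I expect to be the main obstacle.

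The rest is routine. From the decomposition $g_t-\nabla_{u_t}J=(B_t-\nabla_u f_t)\lambda_{t+1}+B_t(\tilde\lambda_{t+1}-\lambda_{t+1})$, together with $\norm{B_t-\nabla_u f_t}\le\zeta\smin(\nabla_u f_t)$ (\cref{ass:error}(b)), $\norm{B_t}\le(1+\zeta)\smax(\nabla_u f_t)$, the costate estimate above, and $\norm{\nabla_{u_t}J}\ge\smin(\nabla_u f_t)\norm{\lambda_{t+1}}$, I would obtain $\norm{g_t-\nabla_{u_t}J}\le(\gamma+\zeta+\gamma\zeta)\norm{\nabla_{u_t}J}$. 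By the triangle and Cauchy--Schwarz inequalities this gives $\langle g_t,\nabla_{u_t}J\rangle\ge\mu\norm{\nabla_{u_t}J}^2$ and $\norm{g_t}\le\nu\norm{\nabla_{u_t}J}$. By \cref{ass:smooth}, summing the per-coordinate Lipschitz inequalities shows $J$ is globally $L$-smooth in $u_{0:T-1}$ (with the same constant $L$), so the quadratic lower bound for $L$-smooth functions applied to $u^{(k+1)}=u^{(k)}+\eta g^{(k)}$ yields $J(u^{(k+1)})\ge J(u^{(k)})+\alpha\eta\norm{\nabla J(u^{(k)})}^2$, where $\norm{\nabla J}^2=\sum_{t=0}^{T-1}\norm{\nabla_{u_t}J}^2$ and $\alpha=\mu-\tfrac{1}{2}\eta L\nu^2$. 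Summing this over $k=0,\dots,N-1$, using $J(u^{(N)})\le J^\star$, and keeping only the $t$-th term of $\norm{\nabla J(u^{(k)})}^2$ on the left, gives $\alpha\eta\sum_{k=0}^{N-1}\norm{\nabla_{u_t}J(u^{(k)})}^2\le J^\star-J(u^{(0)})$; dividing by $N$ is exactly the stated inequality, and the hypothesis $\alpha>0$ is precisely what turns this into a nontrivial bound.
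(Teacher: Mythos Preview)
Your proposal is correct and shares the paper's three-stage architecture: bound the costate error $\norm{\tilde\lambda_{t+1}-\lambda_{t+1}}$, convert that into the surrogate-gradient inequalities $g_t^\top\nabla_{u_t}J\ge\mu\norm{\nabla_{u_t}J}^2$ and $\norm{g_t}\le\nu\norm{\nabla_{u_t}J}$, and then run the $L$-smooth descent argument. The one substantive difference is in the costate step. The paper proves, by backward induction on the offset $s$, the parametric family $\norm{\tilde\lambda_{t+s}-\lambda_{t+s}}\le\frac{\gamma}{3^{s-1}}\kappa^{-1}(\nabla_u f_t)\prod_{i=1}^{s-1}\kappa^{-1}(\nabla_x f_{t+i})\,\norm{\lambda_{t+s}}$ and then specializes to $s=1$; you instead expand $\tilde\lambda_{t+1}-\lambda_{t+1}$ as the telescoping sum $\sum_k(A_{t+1}\cdots A_{k-1})(A_k-\nabla_x f_k)\lambda_{k+1}$ and bound each term directly, summing the geometric series $\sum_{m\ge1}\tfrac{3}{4}(4/9)^m=3/5<1$. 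Your route is more direct for the $s=1$ case and even yields a slightly sharper prefactor; the paper's inductive lemma is a reusable bound at every offset, not just the one that is ultimately needed. Two smaller differences: the paper decomposes $g_t^\top\nabla_{u_t}J$ into three cross-terms and bounds each, whereas you bound $\norm{g_t-\nabla_{u_t}J}$ once and obtain both surrogate inequalities via Cauchy--Schwarz and the triangle inequality; and in the final step the paper applies the descent lemma coordinatewise in $u_t$, while you first pass to full-vector $L$-smoothness (a valid reduction under \cref{ass:smooth} as written) and then drop all but the $t$-th summand at the end.
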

\begin{proof}
    See \cref{app:proof}.
    The proof depends on a novel intricate analysis of the backpropagation procedure in the case of an accurate forward pass and an inaccurate backward pass. This technique may also be applicable to other (non-control) domains, as described in \cref{app:related-work}.
\end{proof}
\vspace{-1em}
\subsection{Model-based open-loop RL}\label{sec:model-based}
The most direct way to approximate the Jacobians  $\nabla_x f_t$ and $\nabla_u f_t$ is by using a (learned or manually designed) differentiable model $\tilde f: \mathcal X \times \mathcal U \to \mathcal X$ of the dynamics $f$ and setting $A_t = \nabla_x \tilde f(x_t, u_t)$ and $B_t = \nabla_u \tilde f(x_t, u_t)$ in \cref{line:jacobians} of \cref{alg:pontryagin}.
\Cref{thm:main} guarantees that this \emph{model-based} open-loop RL method (see \cref{alg:model-based}) is robust to a certain amount of modeling error.
In contrast to this, consider the more naive method of using the model to directly obtain a gradient by differentiating\looseness=-1
\begin{equation*}
  J(u_{0:T-1}) \simeq r(x_0, u_0) + r\{\underbrace{\tilde f(x_0, u_0)}_{\tilde x_1}, u_1\} + \dots + r_T\{\underbrace{\tilde f(\tilde f(\cdots \tilde f(\tilde f(x_0, u_0), u_1)\cdots), u_{T-1})}_{\tilde x_T}\}
\end{equation*}
with respect to the actions $u_{0:T-1}$ using the backpropagation algorithm.
In \cref{app:backprop}, we show that this \emph{planning} approach is exactly equivalent to an approximation of \cref{alg:pontryagin} where, in addition to setting $A_t = \nabla_x \tilde f(x_t, u_t)$ and $B_t = \nabla_u \tilde f(x_t, u_t)$, the forward pass of \cref{line:forward} is replaced by the imagined forward pass $\tilde x_{0:T}$ through the model $\tilde f$.
In \cref{sec:experiments}, we empirically demonstrate that this planning method, whose convergence is not guaranteed by \cref{thm:main}, is much less robust to modeling errors than the open-loop RL approach.
Note that neither method is related to \emph{model-predictive control} (MPC), which relies on measurements to re-plan at every step.
MPC is a closed-loop method that solves a fundamentally different problem from the one we address in this work.

\subsection{Model-free on-trajectory open-loop RL}\label{sec:on-policy}
Access to a reasonably accurate model may not always be feasible, and as \cref{alg:pontryagin} only requires the Jacobians of the dynamics along the current trajectory, a global model is also not necessary.
In the following two sections, we propose two methods that directly estimate the Jacobians $\nabla_x f_t$ and $\nabla_u f_t$ from rollouts in the environment.
We call these methods \emph{model-free}, as the estimated Jacobians are only valid along the current trajectory, and thus cannot be used for planning.

Our goal is to estimate the Jacobians $\nabla_x f(\bar x_t, \bar u_t)$ and $\nabla_u f(\bar x_t, \bar u_t)$ that lie along the trajectory induced by the action sequence $\bar u_{0:T-1}$.
These Jacobians measure how the next state ($\bar x_{t+1}$) changes if the current state or action ($\bar x_t$, $\bar u_t$) are slightly perturbed.
More formally, the dynamics $f$ may be linearized about the reference trajectory $(\bar u_{0:T-1}, \bar x_{0:T})$ as
\begin{equation*}
    \underbrace{f(x_t, u_t) - f(\bar x_t, \bar u_t)}_{\Delta x_{t+1}} \,\simeq\, \nabla_x f(\bar x_t, \bar u_t)^\top \underbrace{(x_t - \bar x_t)}_{\Delta x_t} + \nabla_u f(\bar x_t, \bar u_t)^\top \underbrace{(u_t - \bar u_t)}_{\Delta u_t}\text,
\end{equation*}
which is a valid approximation if the perturbations $\Delta x_{0:T}$ and $\Delta u_{0:T-1}$ are small.
By collecting a dataset of $M \in \mathbb N$ rollouts with slightly perturbed actions, we can thus estimate the Jacobians by solving the (analytically tractable) least-squares problem
\begin{equation}\label{eq:ls}
    \argmin_{[A_t^\top\ B_t^\top] \in \mathbb R^{D\times (D + K)}} \sum_{i = 1}^M  \norm{A_t^\top \Delta x_t^{(i)} + B_t^\top \Delta u_t^{(i)} - \Delta x_{t + 1}^{(i)}}^2\text.
\end{equation}
This technique is illustrated in \cref{fig:jacobian:a} (dashed purple line).
Using these estimates in \cref{alg:pontryagin} yields a model-free method we call \emph{on-trajectory}, as the gradient estimate relies only on data generated based on the current trajectory (see \cref{alg:on-policy} for details).
We see a connection to on-policy methods in closed-loop reinforcement learning, where the policy gradient estimate (or the Q-update) similarly depends only on data generated under the current policy.
Like on-policy methods, on-trajectory methods will benefit greatly from the possibility of parallel environments, which could reduce the effective complexity of the forward pass stage from $M + 1$ rollouts to that of a single rollout.\looseness=-1

\begin{figure}
    \newsavebox{\figbox}
    \sbox\figbox{\includegraphics[width=\linewidth]{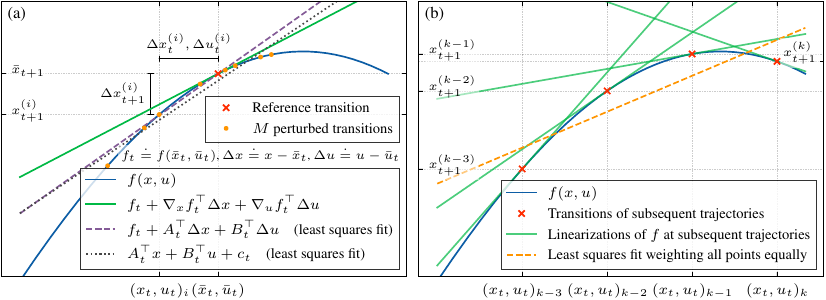}}
    \begin{subfigure}[t][\ht\figbox]{\textwidth}
        \refstepcounter{subfigure}\label{fig:jacobian:a}
        \refstepcounter{subfigure}\label{fig:jacobian:b}
    \end{subfigure}\par\nointerlineskip
    \smash{\usebox{\figbox}}
    \caption{(a) The Jacobians of $f$ (slope of the green linearization) at the reference point $(\bar x_t, \bar u_t)$ can be estimated from the transitions $\{(x_t^{(i)}, u_t^{(i)}, x_{t+1}^{(i)})\}_{i=1}^M$ of $M$ perturbed rollouts. (b) The Jacobians of subsequent trajectories (indexed by $k$) remain close. To estimate the Jacobian at iteration $k$, the most recent iterate ($k - 1$) is more relevant than older iterates.}
    \label{fig:jacobian}
    \vspace{-0.5em}
\end{figure}

\paragraph{Exploiting the Markovian structure.}
Consider a direct linearization of the objective function $J$ about the current trajectory.
Writing the action sequence as a vector $\bar{\bm u} \doteq \operatorname{vec}(\bar u_{0:T-1}) \in \mathbb R^{TK}$, this linearization is given, for $\bm u \in \mathbb R^{TK}$ close to $\bar{\bm u}$, by
\[
    J(\bm u) \simeq J(\bar{\bm u}) + \nabla J(\bar{\bm u})^\top (\bm u - \bar{\bm u}).
\]
We can thus estimate the gradient of the objective function by solving the least squares problem
\[
    \nabla J(\bar{\bm u}) \simeq \argmin_{\bm g \in \mathbb R^{TK}} \sum_{i=1}^M \{J(\bm u_i) - J(\bar{\bm u}) - \bm g^\top (\bm u_i - \bar{\bm u})\}^2\text,
\]
where $\{\bm u_i\}$ are $M \in \mathbb N$ slightly perturbed action sequences.
Due to the dimensionality of $\bar{\bm u}$, this method requires $\mathcal O(TK)$ rollouts to estimate the gradient.
In contrast to this, our approach leverages the Markovian structure of the problem, including the fact that we observe the states $x_{0:T}$ in each rollout.
As the Jacobians are estimated jointly at all time steps, we can expect to get a useful gradient estimate from only $\mathcal O(D^2 + DK)$ rollouts, which significantly reduces the sample complexity if $T$ is large.
This gain in efficiency is demonstrated empirically in \cref{sec:experiments}.

\subsection{Model-free off-trajectory open-loop RL} \label{sec:off-policy}
The on-trajectory algorithm is sample-efficient in the sense that it leverages the problem structure, but a key inefficiency remains: the rollout data sampled at each iteration is discarded after the action sequence is updated.
In this section, we propose an \emph{off-trajectory} method that implicitly uses the data from previous trajectories to construct the Jacobian estimates.
Our approach is based on the following observation.
If the dynamics $f$ are smooth and the step size $\eta$ is small, then the updated trajectory $(u_{0:T-1}^{(k)}, x_{0:T}^{(k)})$ will remain close to the previous iterate $(u_{0:T-1}^{(k - 1)}, x_{0:T}^{(k - 1)})$.
Furthermore, the Jacobians along the updated trajectory will be similar to the previous Jacobians, as illustrated in \cref{fig:jacobian:b}.
Thus, we propose to estimate the Jacobians along the current trajectory from \emph{a single rollout only} by bootstrapping our estimates using the Jacobian estimates from the previous iteration.

Consider again the problem of estimating the Jacobians from multiple perturbed rollouts, illustrated in \cref{fig:jacobian:a}.
Instead of relying on a reference trajectory and \cref{eq:ls}, we can estimate the Jacobians by fitting a linear regression model to the dataset of $M$ perturbed transitions.
Solving
\begin{equation}\label{eq:lslin}
    \argmin_{[A_t^\top\ B_t^\top\ c_t] \in \mathbb R^{D\times (D + K + 1)}} \sum_{i = 1}^M \norm{A_t^\top x_t^{(i)} + B_t^\top u_t^{(i)} + c_t - x_{t + 1}^{(i)}}^2
\end{equation}
yields an approximate linearization $f(x_t, u_t) \simeq A_t^\top x_t + B_t^\top u_t + c_t = F_t z_t$,
with $F_t \doteq [A_t^\top\ B_t^\top\ c_t]$ and $z_t \doteq (x_t, u_t, 1) \in \mathbb R^{D+K+1}$.
This approximation is also shown in \cref{fig:jacobian:a} (dotted gray line).\footnote{If we replace \cref{eq:ls} by \cref{eq:lslin} in \cref{alg:on-policy}, we get a slightly different on-trajectory method with similar performance.}
At iteration $k$, given the estimate $F_t^{(k-1)}$ and a new point $z_t^{(k)} = (x_t^{(k)}, u_t^{(k)}, 1)$ with corresponding target $x_{t+1}^{(k)}$, computing the new estimate $F_t^{(k)}$ is a problem of online linear regression.
We solve this regression problem using an augmented version of the \emph{recursive least squares} (RLS) algorithm \citep[e.g.,][Sec.\ 11.2]{ljung1999system}.
By introducing a prior precision matrix $Q_t^{(0)} \doteq q_0 I$ for each time $t$, where $q_0 > 0$, we compute the update at iteration $k \in \mathbb N$ (see \cref{alg:off-policy}) as
\begin{align}
    Q_t^{(k)} &= \alpha Q_t^{(k-1)} + (1 - \alpha) q_0I + z_t^{(k)} \{z_t^{(k)}\}^\top\label{eq:rls-aug}\\
    F_t^{(k)} &= F_t^{(k-1)} + \{Q_t^{(k)}\}^{-1} z_t^{(k)} \{x_{t+1}^{(k)} - F_t^{(k-1)} z_t^{(k)}\}^\top\nonumber\text.
\end{align}

\paragraph{Forgetting and stability.}
The standard RLS update of the precision matrix corresponds to \cref{eq:rls-aug} with $\alpha = 1$.
In the limit as $q_0 \to 0$, the RLS algorithm is equivalent to the batch processing of \cref{eq:lslin}, which treats all points equally.
However, as illustrated in \cref{fig:jacobian:b}, points from recent trajectories should be given more weight, as transitions that happened many iterations ago will give little information about the Jacobians along the current trajectory.
We can incorporate a \emph{forgetting factor} $\alpha \in (0, 1)$ into the precision update with the effect that past data points are exponentially downweighted:\looseness=-1
\begingroup
\abovedisplayskip=5pt
\belowdisplayskip=5pt
\begin{equation}
    Q_t^{(k)} = \alpha Q_t^{(k-1)} + z_t^{(k)} \{z_t^{(k)}\}^\top\quad\leadsto\quad Q_t^{(k)} = \alpha^k q_0I + \sum_{i=1}^k \alpha^{k - i} z_t^{(i)} \{z_t^{(i)}\}^\top.\label{eq:rls}
\end{equation}
\endgroup
This forgetting factor introduces a new problem: instability.
If subsequent trajectories lie close to each other, then the sum of outer products may become singular (e.g., if all $z_t^{(i)}$ are identical, then the sum has rank $1$).
As the prior $q_0I$ is downweighted, at some point inverting $Q$ may become numerically unstable.
Our modification in \cref{eq:rls-aug} adds $(1 - \alpha) q_0I$ in each update, which has the effect of removing the $\alpha^k$ coefficient in front of $q_0I$ in \cref{eq:rls}.
If the optimization procedure converges, then eventually subsequent trajectories will indeed lie close together.
Although \cref{eq:rls-aug} prevents issues with instability, the quality of the Jacobian estimates will still degrade, as this estimation inherently requires perturbations (see \cref{sec:on-policy}).
In \cref{alg:off-policy}, we thus slightly perturb the actions used in each rollout to get more diverse data.\looseness=-1

\section{Experiments}\label{sec:experiments}
\subsection{Inverted pendulum swing-up}\label{sec:pendulum}
We empirically evaluate our algorithms on the inverted pendulum swing-up task shown in \cref{fig:pendulum}.
As a performance criterion we define $J_\mathrm{max} \doteq \max_{k \in [N]_0} J(u_{0:T-1}^{(k)})$ as the return achieved by the best action sequence over a complete learning process of $N$ optimization steps.
The task is considered \emph{solved} if $J_\mathrm{max}$ exceeds a certain threshold.
A detailed description of the task is given in \cref{app:pendulum}.
We repeat our experiments with $100$ random seeds and show $95\%$ bootstrap confidence intervals in all plots.\looseness=-1
\begin{figure}
  \centering
  \begin{minipage}[t]{.46\textwidth}
    \vspace{0pt}
      \includegraphics[width=\textwidth]{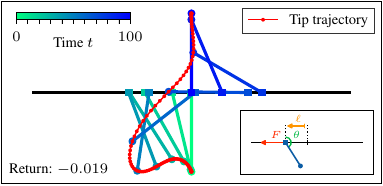}
      \caption{The inverted pendulum swing-up task. The goal is to control the force $F$ such that the tip of the pendulum swings up above the base. The shown solution was found by the on-trajectory method of \cref{sec:on-policy}.}
      \label{fig:pendulum}
  \end{minipage}\hfill
  \begin{minipage}[t]{0.51\textwidth}
    \vspace{0pt}
    \includegraphics[width=\textwidth]{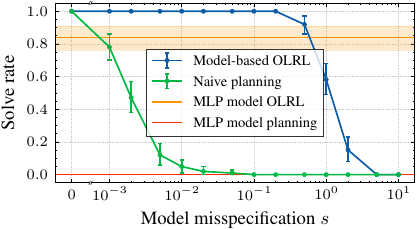}
    \vspace{-12pt}
    \caption{The model-based open-loop RL algorithm can solve the pendulum problem reliably even with a considerable model error.}
    \label{fig:robustness}
  \end{minipage}
  \vspace{-1.1em}
\end{figure}


\paragraph{Robustness: model-based open-loop RL.}
In \cref{thm:main}, we proved that our model-based open-loop RL method (\cref{alg:model-based}) can accommodate some model error and still converge to a local maximum of the true objective.
To test the robustness of our algorithm against model misspecification, we use a pendulum system with inaccurate parameters as the model $\tilde f$.
Concretely, if $m_i$ is the $i$\textsuperscript{th} parameter of the true system (cf.\ \cref{app:pendulum}), we sample the corresponding model parameter $\tilde m_i$ from a log-normal distribution centered at $m_i$, such that $\tilde m_i = \xi m_i$, with $\ln \xi\sim \mathcal N(0, s^2)$.
The severity of the model error is then controlled by the scale parameter $s$.
In \cref{fig:robustness}, we compare the performance of our method with the planning procedure described in \cref{sec:model-based}, in which the forward pass is performed through the model $\tilde f$ instead of the real system $f$.
Whereas the planning method only solves the pendulum reliably with the true system as the model ($s = 0$), the open-loop RL method can accommodate a considerable model misspecification.

In a second experiment, we represent the model $\tilde f$ by a small multi-layer perceptron (MLP).
The model is learned from $1000$ rollouts, with the action sequences sampled from a pink noise distribution, as suggested by \citet{eberhard-2023-pink}.
\Cref{fig:robustness} compares the performance achieved with this model by our algorithm and by the planning method.
As the MLP model represents a considerable misspecification of the true dynamics, only the open-loop RL method manages to solve the pendulum task.\looseness=-1

\paragraph{Structure: on-trajectory open-loop RL.}
Our model-free on-trajectory method (\cref{alg:on-policy}) uses rollouts to directly estimate the Jacobians needed to update the action sequence.
It is clear from \cref{eq:ls} that more rollouts (i.e., larger $M$) will give more accurate Jacobian estimates, and therefore increase the quality of the gradient approximation.
In \cref{fig:rollouts}, we analyze the sample efficiency of the this algorithm by comparing the performance achieved at different values of $M$, where the number $N$ of optimization steps remains fixed.
We compare our method to the finite-difference approach described at the end of \cref{sec:on-policy} and to the gradient-free cross-entropy method \citep[CEM;][]{rubinstein1999cross}.
Both these methods also update the action sequence on the basis of $M$ perturbed rollouts in the environment.
As in our method, the $M$ action sequences are perturbed using Gaussian white noise with noise scale $\sigma$.
We describe both baselines in detail in \cref{app:baselines}.
The \emph{oracle} performance shown in \cref{fig:rollouts} corresponds to \cref{alg:pontryagin} with the true gradient, i.e., $A_t = \nabla_x f_t$ and $B_t = \nabla_u f_t$.

\Cref{fig:rollouts} shows that the performance of both the finite-difference method and CEM heavily depends on the choice of the noise scale $\sigma$, whereas our method performs identically for all three values of $\sigma$.
Even for tuned values of $\sigma$, the finite-difference method and CEM still need approximately twice as many rollouts per iteration as the open-loop RL method to reliably swing up the pendulum.
At $10$ rollouts per iteration, our method matches the oracle's performance, while both baselines are below the \emph{solved} threshold.
This empirically confirms our theoretical claims at the end of \cref{sec:on-policy}, where we argue that exploiting the Markovian structure of the problem leads to increased sample efficiency.\looseness=-1

\paragraph{Efficiency: off-trajectory open-loop RL.}
Finally, we turn to the method proposed in \cref{sec:off-policy} (\cref{alg:off-policy}), which promises increased sample efficiency by estimating the Jacobians in an off-trajectory fashion.
The performance of this algorithm is shown in \cref{fig:learning}, where the learning curves of all our methods as well as the two baselines and the oracle are plotted.
For the on-trajectory methods compared in \cref{fig:rollouts}, we chose for each the minimum number of rollouts $M$ such that, under the best choice for $\sigma$, the method would reliably solve the swing-up task.
The hyperparameters for all methods are summarized in \cref{app:hyper}.
It can be seen that the off-trajectory method, which only requires one rollout per iteration, converges much faster than the on-trajectory open-loop RL method.

\begin{figure}
  \centering
  \begin{minipage}[t]{.49\textwidth}
    \vspace{0pt}
      \includegraphics[width=\textwidth]{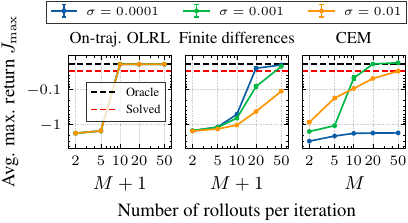}
      \caption{The on-trajectory open-loop RL method is more sample-efficient than the finite-difference and cross-entropy methods. It is also much less sensitive to the noise scale $\sigma$.}
      \label{fig:rollouts}
  \end{minipage}\hfill
  \begin{minipage}[t]{.49\textwidth}
    \vspace{0pt}
    \includegraphics[width=\textwidth]{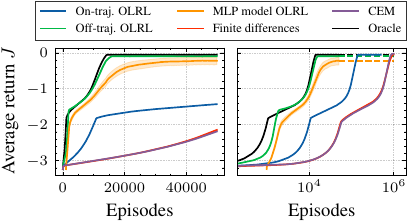}
    \caption{Learning curves on the pendulum task. On the right, we show a longer time period in log scale. The off-trajectory open-loop RL method converges almost as fast as the oracle method.}
    \label{fig:learning}
  \end{minipage}
  \vspace{-1em}
\end{figure}
  
\subsection{MuJoCo}\label{sec:mjc}
\begin{wrapfigure}{r}{0.6\textwidth}
  \vspace{-3em}
  \centering
  \includegraphics[width=\linewidth]{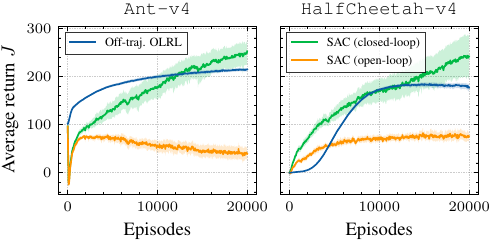}
  \caption{Learning curves of our off-trajectory open-loop RL method and soft actor-critic (SAC) for two MuJoCo tasks. All experiments were repeated with $20$ random seeds, and we show $95\%$-bootstrap confidence intervals for the average return. The horizon is fixed to $T = 100$.}
  \label{fig:mjc}
  \vspace{-1.2em}
\end{wrapfigure}
While the inverted pendulum is illustrative for analyzing our algorithms empirically, it is a relatively simple task with smooth, low-dimensional, and deterministic dynamics.
In this section, we test our method in two considerably more challenging environments: the \texttt{Ant-v4} and \texttt{HalfCheetah-v4} tasks provided by the OpenAI Gym library \citep{brockman-2016-gym,towers-2023-gymnaisum}, implemented in MuJoCo \citep{todorov2012mujoco}.
These environments are high-dimensional, they exhibit non-smooth contact dynamics, and the initial state is randomly sampled at the beginning of each episode.

We tackle these two tasks with our model-free off-trajectory method (\cref{alg:off-policy}).
The results are shown in \cref{fig:mjc}, where we compare to the closed-loop RL baseline soft actor-critic \citep[SAC;][]{haarnoja-2018a-soft}.
It can be seen that the open-loop RL method performs comparably to SAC, even though SAC learns a closed-loop policy that is capable of adapting its behavior to the initial condition.\footnote{In this comparison, our method is further disadvantaged by the piecewise constant ``health'' terms in the reward function of \texttt{Ant-v4}. Our method, exclusively relying on the gradient of the reward function, ignores these.\looseness=-1}
In the figure, we also analyze the open-loop performance achieved by SAC.
Whereas the closed-loop performance is the return obtained in a rollout where the actions are taken according to the mean of the Gaussian policy, the open-loop return is achieved by blindly executing exactly the same actions in a new episode.
The discrepancy in performance is thus completely due to the stochasticity in the initial state.
In \cref{app:mjc-long}, we show that our method also works with a longer horizon $T$.\looseness=-1

The results demonstrate that the open-loop RL algorithm is robust to a certain level of stochasticity in the initial state of stable dynamical systems.
Additionally, while our convergence analysis depends on the assumption of smooth dynamics, these experiments empirically demonstrate that the algorithms are also able to tackle non-smooth contact dynamics.
Finally, we see that the high dimensionality of the MuJoCo systems is handled without complications.
While soft actor-critic is an elegant and powerful algorithm, the combination with deep function approximation can make efficient learning more difficult.
Our methods are considerably simpler and, because they are based on Pontryagin's principle rather than dynamic programming, they evade the curse of dimensionality by design, and thus do not require any function approximation.

\section{Discussion}\label{sec:discussion}
This paper makes an important first step towards understanding how principles from open-loop optimal control can be combined with ideas from reinforcement learning while preserving convergence guarantees.
We propose three algorithms that address this \emph{open-loop RL} problem, from robust trajectory optimization with an approximate model to sample-efficient learning under fully unknown dynamics.
This work focuses on reinforcement learning in continuous state and action spaces, a class of problems known to be challenging \citep{recht2019}.
Although this setting allows us to leverage continuous optimization techniques, we expect that most ideas will transfer to the discrete setting, and we would be interested to see further research on this topic.

It is interesting to note that there are many apparent parallels between our open-loop RL algorithms and their closed-loop counterparts.
The distinction between model-based and model-free methods is similar to that in closed-loop RL.
Likewise, the on-trajectory and off-trajectory methods we present show a tradeoff between sample efficiency and stability that is reminiscent of the tradeoffs between on-policy and off-policy methods in closed-loop RL.
The question of exploration, which is central to reinforcement learning, also arises in our case.
We do not address this complex problem thoroughly here but instead rely on additive Gaussian noise to sample diverse trajectories.

\paragraph{Limitations of open-loop RL.}
Another important open question is how open-loop methods fit into the reinforcement learning landscape.
An inherent limitation of these methods is that an open-loop controller can, by definition, not react to unexpected changes in the system's state, be it due to random disturbances or an adversary.
An open-loop controller cannot balance an inverted pendulum in its unstable position\footnote{Except with a clever trick called \emph{vibrational control} \citep{meerkov-1980-principle}.}, track a reference trajectory in noisy conditions, or play Go, where reactions to the opponent's moves are constantly required.
In these situations, open-loop RL is not viable or only effective over a very short horizon $T$.
However, if the disturbances are small, and the system is not sensitive to small changes in state or action (roughly speaking, if the system is stable and non-chaotic), then a reaction is not necessary, and open-loop RL works even for long horizons $T$ (as we highlight in our MuJoCo experiments, cf.\ \cref{app:mjc-long}).
Open-loop control can be viewed as a special case of closed-loop control, and therefore it is clear that closed-loop control is much more powerful.
Our algorithms provide a first solution to the open-loop RL problem and are not intended to replace any of the existing closed-loop RL algorithms.
In control engineering, it is common to combine feedback and feedforward techniques.
In many situations, it can be shown that such a combination will significantly outperform a solution based on feedback alone \citep[e.g.,][Sec.\ 12.4]{aastrom2021feedback}.
We believe that ultimately a combination of open-loop and closed-loop techniques will also be fruitful in reinforcement learning and think that this is an important direction for future research.\looseness=-1

\acks{
We thank the International Max Planck Research School for Intelligent Systems (IMPRS-IS) for their support.
C.\ Vernade is funded by the German Research Foundation (DFG) under both the project 468806714 of the Emmy Noether Programme and under Germany's Excellence Strategy -- EXC number 2064/1 -- Project number 390727645.
M.\ Muehlebach is funded by the German Research Foundation (DFG) under the project 456587626 of the Emmy Noether Programme.
}


\bibliography{bib}

\begin{thebibliography}{48}
\providecommand{\natexlab}[1]{#1}
\providecommand{\url}[1]{\texttt{#1}}
\expandafter\ifx\csname urlstyle\endcsname\relax
  \providecommand{\doi}[1]{doi: #1}\else
  \providecommand{\doi}{doi: \begingroup \urlstyle{rm}\Url}\fi

\bibitem[{\AA}str{\"o}m and H{\"a}gglund(1995)]{astrom1995pid}
Karl~J. {\AA}str{\"o}m and Tore H{\"a}gglund.
\newblock \emph{PID Controllers: Theory, Design, and Tuning}.
\newblock International Society for Measurement and Control, second edition,
  1995.

\bibitem[{\AA}str{\"o}m and Murray(2021)]{aastrom2021feedback}
Karl~J. {\AA}str{\"o}m and Richard~M. Murray.
\newblock \emph{Feedback Systems: An Introduction for Scientists and
  Engineers}.
\newblock Princeton University Press, second edition, 2021.

\bibitem[Betts(2010)]{betts2010practical}
John~T. Betts.
\newblock \emph{Practical Methods for Optimal Control and Estimation Using
  Nonlinear Programming}.
\newblock SIAM, 2010.

\bibitem[B{\"o}ttcher et~al.(2022)B{\"o}ttcher, Antulov-Fantulin, and
  Asikis]{bottcher2022ai}
Lucas B{\"o}ttcher, Nino Antulov-Fantulin, and Thomas Asikis.
\newblock {AI} {P}ontryagin or how artificial neural networks learn to control
  dynamical systems.
\newblock \emph{Nature communications}, 13\penalty0 (333), 2022.
\newblock URL \url{https://doi.org/10.1038/s41467-021-27590-0}.

\bibitem[Bottou et~al.(2018)Bottou, Curtis, and
  Nocedal]{bottou2018optimization}
L{\'e}on Bottou, Frank~E Curtis, and Jorge Nocedal.
\newblock Optimization methods for large-scale machine learning.
\newblock \emph{SIAM review}, 60\penalty0 (2):\penalty0 223--311, 2018.
\newblock URL \url{https://doi.org/10.1137/16M1080173}.

\bibitem[Brockman et~al.(2016)Brockman, Cheung, Pettersson, Schneider,
  Schulman, Tang, and Zaremba]{brockman-2016-gym}
Greg Brockman, Vicki Cheung, Ludwig Pettersson, Jonas Schneider, John Schulman,
  Jie Tang, and Wojciech Zaremba.
\newblock {OpenAI} {Gym}.
\newblock \emph{arXiv:1606.01540}, 2016.
\newblock URL \url{http://arxiv.org/abs/1606.01540}.

\bibitem[Carraro et~al.(2015)Carraro, Geiger, K{\"o}rkel, and
  Rannacher]{Carraro2015}
Thomas Carraro, Michael Geiger, Stefan K{\"o}rkel, and Rolf Rannacher, editors.
\newblock \emph{Multiple Shooting and Time Domain Decomposition Methods}.
\newblock Springer, 2015.

\bibitem[Chen and Braun(2019)]{chen-2019-hardware}
Yuqing Chen and David~J. Braun.
\newblock Hardware-in-the-loop iterative optimal feedback control without
  model-based future prediction.
\newblock \emph{IEEE Transactions on Robotics}, 35\penalty0 (6):\penalty0
  1419--1434, 2019.
\newblock URL \url{https://doi.org/10.1109/TRO.2019.2929014}.

\bibitem[Degrave et~al.(2022)Degrave, Felici, Buchli, Neunert, Tracey,
  Carpanese, Ewalds, Hafner, Abdolmaleki, de~Las~Casas,
  et~al.]{degrave2022magnetic}
Jonas Degrave, Federico Felici, Jonas Buchli, Michael Neunert, Brendan Tracey,
  Francesco Carpanese, Timo Ewalds, Roland Hafner, Abbas Abdolmaleki, Diego
  de~Las~Casas, et~al.
\newblock Magnetic control of tokamak plasmas through deep reinforcement
  learning.
\newblock \emph{Nature}, 602\penalty0 (7897):\penalty0 414--419, 2022.
\newblock URL \url{https://doi.org/10.1038/s41586-021-04301-9}.

\bibitem[Diehl et~al.(2006)Diehl, Bock, Diedam, and Wieber]{diehl2006}
Moritz Diehl, Hans~Georg Bock, Holger Diedam, and Pierre-Brice Wieber.
\newblock Fast direct multiple shooting algorithms for optimal robot control.
\newblock In \emph{Fast Motions in Biomechanics and Robotics: Optimization and
  Feedback Control}, pages 65--93. Springer, 2006.
\newblock URL \url{https://doi.org/10.1007/978-3-540-36119-0_4}.

\bibitem[Eberhard et~al.(2023)Eberhard, Hollenstein, Pinneri, and
  Martius]{eberhard-2023-pink}
Onno Eberhard, Jakob Hollenstein, Cristina Pinneri, and Georg Martius.
\newblock Pink noise is all you need: Colored noise exploration in deep
  reinforcement learning.
\newblock In \emph{Proceedings of the Eleventh International Conference on
  Learning Representations}, 2023.
\newblock URL \url{https://openreview.net/forum?id=hQ9V5QN27eS}.

\bibitem[Fragkiadaki et~al.(2015)Fragkiadaki, Levine, Felsen, and
  Malik]{fragkiadaki2015recurrent}
Katerina Fragkiadaki, Sergey Levine, Panna Felsen, and Jitendra Malik.
\newblock Recurrent network models for human dynamics.
\newblock In \emph{Proceedings of the IEEE International Conference on Computer
  Vision}, pages 4346--4354, 2015.
\newblock URL \url{https://doi.org/10.1109/ICCV.2015.488}.

\bibitem[Gamiz et~al.(2020)Gamiz, Martínez, Grau, Bolea, and
  Vilanova]{gamiz-2020-feed}
Javier Gamiz, Herminio Martínez, Antoni Grau, Yolanda Bolea, and Ramón
  Vilanova.
\newblock Feed-forward control for a drinking water treatment plant
  chlorination process.
\newblock In \emph{2020 25th IEEE International Conference on Emerging
  Technologies and Factory Automation (ETFA)}, volume~1, pages 462--467, 2020.
\newblock URL \url{https://doi.org/10.1109/ETFA46521.2020.9211884}.

\bibitem[Geering(2007)]{geering2007}
Hans~P. Geering.
\newblock \emph{Optimal Control with Engineering Applications}.
\newblock Springer, 2007.

\bibitem[Haarnoja et~al.(2018{\natexlab{a}})Haarnoja, Zhou, Abbeel, and
  Levine]{haarnoja-2018a-soft}
Tuomas Haarnoja, Aurick Zhou, Pieter Abbeel, and Sergey Levine.
\newblock Soft actor-critic: Off-policy maximum entropy deep reinforcement
  learning with a stochastic actor.
\newblock In \emph{Proceedings of the 35th International Conference on Machine
  Learning}, volume~80 of \emph{Proceedings of Machine Learning Research},
  pages 1856--1865. PMLR, 2018{\natexlab{a}}.
\newblock URL \url{http://proceedings.mlr.press/v80/haarnoja18b.html}.

\bibitem[Haarnoja et~al.(2018{\natexlab{b}})Haarnoja, Zhou, Hartikainen,
  Tucker, Ha, Tan, Kumar, Zhu, Gupta, Abbeel, and Levine]{haarnoja-2018b-soft}
Tuomas Haarnoja, Aurick Zhou, Kristian Hartikainen, George Tucker, Sehoon Ha,
  Jie Tan, Vikash Kumar, Henry Zhu, Abhishek Gupta, Pieter Abbeel, and Sergey
  Levine.
\newblock Soft actor-critic algorithms and applications.
\newblock \emph{arXiv:1812.05905}, 2018{\natexlab{b}}.
\newblock URL \url{http://arxiv.org/abs/1812.05905}.

\bibitem[Hafner et~al.(2019)Hafner, Lillicrap, Fischer, Villegas, Ha, Lee, and
  Davidson]{hafner2019planet}
Danijar Hafner, Timothy Lillicrap, Ian Fischer, Ruben Villegas, David Ha,
  Honglak Lee, and James Davidson.
\newblock Learning latent dynamics for planning from pixels.
\newblock In \emph{Proceedings of the 36th International Conference on Machine
  Learning}, volume~97 of \emph{Proceedings of Machine Learning Research},
  pages 2555--2565. PMLR, 2019.
\newblock URL \url{https://proceedings.mlr.press/v97/hafner19a.html}.

\bibitem[Hansen et~al.(1996)Hansen, Barto, and
  Zilberstein]{hansen1996reinforcement}
Eric Hansen, Andrew Barto, and Shlomo Zilberstein.
\newblock Reinforcement learning for mixed open-loop and closed-loop control.
\newblock \emph{Advances in Neural Information Processing Systems}, 9, 1996.
\newblock URL
  \url{https://proceedings.neurips.cc/paper/1996/hash/ab1a4d0dd4d48a2ba1077c4494791306-Abstract.html}.

\bibitem[Horn et~al.(2013)Horn, Gros, and Diehl]{horn-2013-numerical}
Greg Horn, S{\'e}bastien Gros, and Moritz Diehl.
\newblock Numerical trajectory optimization for airborne wind energy systems
  described by high fidelity aircraft models.
\newblock In Uwe Ahrens, Moritz Diehl, and Roland Schmehl, editors,
  \emph{Airborne Wind Energy}, pages 205--218. Springer, 2013.
\newblock URL \url{https://doi.org/10.1007/978-3-642-39965-7_11}.

\bibitem[Howe et~al.(2022)Howe, Dufort-Labb{\'e}, Rajkumar, and
  Bacon]{howe2022myriad}
Nikolaus Howe, Simon Dufort-Labb{\'e}, Nitarshan Rajkumar, and Pierre-Luc
  Bacon.
\newblock Myriad: a real-world testbed to bridge trajectory optimization and
  deep learning.
\newblock In \emph{Advances in Neural Information Processing Systems},
  volume~35, pages 29801--29815, 2022.
\newblock URL
  \url{https://proceedings.neurips.cc/paper_files/paper/2022/hash/c0b91f9a3587bf35287f41dba5d20233-Abstract-Datasets_and_Benchmarks.html}.

\bibitem[Janjua et~al.(2024)Janjua, Shah, White, Miahi, Machado, and
  White]{janjua2024gvfs}
Muhammad~Kamran Janjua, Haseeb Shah, Martha White, Erfan Miahi, Marlos~C
  Machado, and Adam White.
\newblock {GVF}s in the real world: making predictions online for water
  treatment.
\newblock \emph{Machine Learning}, 113\penalty0 (8):\penalty0 5151--5181, 2024.
\newblock URL \url{https://doi.org/10.1007/s10994-023-06413-x}.

\bibitem[Jin et~al.(2020)Jin, Wang, Yang, and Mou]{jin2020pontryagin}
Wanxin Jin, Zhaoran Wang, Zhuoran Yang, and Shaoshuai Mou.
\newblock Pontryagin differentiable programming: An end-to-end learning and
  control framework.
\newblock In \emph{Advances in Neural Information Processing Systems},
  volume~33, pages 7979--7992, 2020.
\newblock URL
  \url{https://proceedings.neurips.cc/paper/2020/hash/5a7b238ba0f6502e5d6be14424b20ded-Abstract.html}.

\bibitem[Kingma and Ba(2014)]{kingma2014adam}
Diederik~P. Kingma and Jimmy Ba.
\newblock Adam: A method for stochastic optimization.
\newblock In \emph{Proceedings of the Third International Conference on
  Learning Representations}, 2014.
\newblock URL \url{http://arxiv.org/abs/1412.6980}.

\bibitem[Lee et~al.(2016)Lee, Delbruck, and Pfeiffer]{lee-2016-training}
Jun~Haeng Lee, Tobi Delbruck, and Michael Pfeiffer.
\newblock Training deep spiking neural networks using backpropagation.
\newblock \emph{Frontiers in Neuroscience}, 10, 2016.
\newblock URL \url{https://doi.org/10.3389/fnins.2016.00508}.

\bibitem[Ljung(1999)]{ljung1999system}
Lennart Ljung.
\newblock \emph{System Identification: Theory for the User}.
\newblock Prentice Hall, 1999.

\bibitem[Ma et~al.(2022)Ma, B\"{u}chler, Sch\"{o}lkopf, and Muehlebach]{ma2022}
Hao Ma, Dieter B\"{u}chler, Bernhard Sch\"{o}lkopf, and Michael Muehlebach.
\newblock A learning-based iterative control framework for controlling a robot
  arm with pneumatic artificial muscles.
\newblock In \emph{Proceedings of Robotics: Science and Systems}, 2022.
\newblock URL \url{https://www.roboticsproceedings.org/rss18/p029.html}.

\bibitem[Ma et~al.(2023)Ma, B\"{u}chler, Sch\"{o}lkopf, and Muehlebach]{ma2023}
Hao Ma, Dieter B\"{u}chler, Bernhard Sch\"{o}lkopf, and Michael Muehlebach.
\newblock Reinforcement learning with model-based feedforward inputs for
  robotic table tennis.
\newblock \emph{Autonomous Robots}, 47:\penalty0 1387--1403, 2023.
\newblock URL \url{https://doi.org/10.1007/s10514-023-10140-6}.

\bibitem[Mattei et~al.(2006)Mattei, Albanese, Ambrosino, and
  Portone]{mattei-2006-open}
Massimiliano Mattei, Raffaele Albanese, Giuseppe Ambrosino, and Alfredo
  Portone.
\newblock Open loop control strategies for plasma scenarios: Linear and
  nonlinear techniques for configuration transitions.
\newblock In \emph{Proceedings of the 45th IEEE Conference on Decision and
  Control}, pages 2220--2225, 2006.
\newblock URL \url{https://doi.org/10.1109/CDC.2006.377412}.

\bibitem[Meerkov(1980)]{meerkov-1980-principle}
Semyon~M. Meerkov.
\newblock Principle of vibrational control: Theory and applications.
\newblock \emph{IEEE Transactions on Automatic Control}, 25\penalty0
  (4):\penalty0 755--762, 1980.
\newblock URL \url{https://doi.org/10.1109/TAC.1980.1102426}.

\bibitem[Moore(1993)]{moore1993}
Kevin~L. Moore.
\newblock \emph{Iterative Learning Control for Deterministic Systems}.
\newblock Springer, 1993.
\newblock URL \url{https://doi.org/10.1007/978-1-4471-1912-8}.

\bibitem[Niemeyer et~al.(2020)Niemeyer, Mescheder, Oechsle, and
  Geiger]{Niemeyer2020CVPR}
Michael Niemeyer, Lars Mescheder, Michael Oechsle, and Andreas Geiger.
\newblock Differentiable volumetric rendering: Learning implicit {3D}
  representations without {3D} supervision.
\newblock In \emph{Proceedings of the {IEEE/CVF} Conference on Computer Vision
  and Pattern Recognition}, pages 3504--3515, 2020.
\newblock URL
  \url{https://openaccess.thecvf.com/content\_CVPR\_2020/html/Niemeyer\_Differentiable\_Volumetric\_Rendering\_Learning\_Implicit\_3D\_Representations\_Without\_3D\_Supervision\_CVPR\_2020\_paper.html}.

\bibitem[Pinneri et~al.(2021)Pinneri, Sawant, Blaes, Achterhold, Stueckler,
  Rolinek, and Martius]{pinneri2021sample}
Cristina Pinneri, Shambhuraj Sawant, Sebastian Blaes, Jan Achterhold, Joerg
  Stueckler, Michal Rolinek, and Georg Martius.
\newblock Sample-efficient cross-entropy method for real-time planning.
\newblock In \emph{Proceedings of the 2020 Conference on Robot Learning},
  volume 155 of \emph{Proceedings of Machine Learning Research}, pages
  1049--1065. PMLR, 2021.
\newblock URL \url{https://proceedings.mlr.press/v155/pinneri21a.html}.

\bibitem[Pontryagin et~al.(1962)Pontryagin, Boltayanskii, Gamkrelidze, and
  Mishchenko]{pontryagin-1962-dynamic}
Lev~S. Pontryagin, Vladimir~G. Boltayanskii, Revaz~V. Gamkrelidze, and
  Evgenii~F. Mishchenko.
\newblock \emph{Mathematical Theory of Optimal Processes}.
\newblock Wiley, 1962.

\bibitem[Raffin et~al.(2021)Raffin, Hill, Gleave, Kanervisto, Ernestus, and
  Dormann]{stable-baselines3}
Antonin Raffin, Ashley Hill, Adam Gleave, Anssi Kanervisto, Maximilian
  Ernestus, and Noah Dormann.
\newblock {Stable-Baselines3}: Reliable reinforcement learning implementations.
\newblock \emph{Journal of Machine Learning Research}, 22\penalty0
  (268):\penalty0 1--8, 2021.
\newblock URL \url{http://jmlr.org/papers/v22/20-1364.html}.

\bibitem[Raffin et~al.(2023)Raffin, Sigaud, Kober, Albu-Sch{\"a}ffer,
  Silv{\'e}rio, and Stulp]{raffin2023simple}
Antonin Raffin, Olivier Sigaud, Jens Kober, Alin Albu-Sch{\"a}ffer, Jo{\~a}o
  Silv{\'e}rio, and Freek Stulp.
\newblock A simple open-loop baseline for reinforcement learning locomotion
  tasks.
\newblock \emph{arXiv:2310.05808}, 2023.

\bibitem[Recht(2019)]{recht2019}
Benjamin Recht.
\newblock A tour of reinforcement learning: The view from continuous control.
\newblock \emph{Annual Review of Control, Robotics, and Autonomous Systems},
  2:\penalty0 253--279, 2019.
\newblock URL \url{https://doi.org/10.1146/annurev-control-053018-023825}.

\bibitem[Rubinstein(1999)]{rubinstein1999cross}
Reuven Rubinstein.
\newblock The cross-entropy method for combinatorial and continuous
  optimization.
\newblock \emph{Methodology and Computing in Applied Probability}, 1:\penalty0
  127--190, 1999.
\newblock URL \url{https://doi.org/10.1023/A:1010091220143}.

\bibitem[Schaal and Atkeson(2010)]{schaal2010}
Stefan Schaal and Christopher~G. Atkeson.
\newblock Learning control in robotics.
\newblock \emph{IEEE Robotics \& Automation Magazine}, 17\penalty0
  (2):\penalty0 20--29, 2010.
\newblock URL \url{https://doi.org/10.1109/MRA.2010.936957}.

\bibitem[Sch\"{a}fer et~al.(2007)Sch\"{a}fer, K\"{u}hl, Diehl, Schl\"{o}der,
  and Bock]{diehl2007}
Andreas Sch\"{a}fer, Peter K\"{u}hl, Moritz Diehl, Johannes Schl\"{o}der, and
  Hans~Georg Bock.
\newblock Fast reduced multiple shooting methods for nonlinear model predictive
  control.
\newblock \emph{Chemical Engineering and Processing: Process Intensification},
  46\penalty0 (11):\penalty0 1200--1214, 2007.
\newblock URL \url{https://doi.org/10.1016/j.cep.2006.06.024}.

\bibitem[Sferrazza et~al.(2020)Sferrazza, Muehlebach, and
  D'Andrea]{sferrazza2020}
Carmelo Sferrazza, Michael Muehlebach, and Raffaello D'Andrea.
\newblock Learning-based parametrized model predictive control for trajectory
  tracking.
\newblock \emph{Optimal Control Applications and Methods}, 41\penalty0
  (6):\penalty0 2225--2249, 2020.
\newblock URL \url{https://doi.org/10.1002/oca.2656}.

\bibitem[Skogestad and Postlethwaite(2005)]{skogestad2005multivariable}
Sigurd Skogestad and Ian Postlethwaite.
\newblock \emph{Multivariable Feedback Control: Analysis and Design}.
\newblock Wiley, second edition, 2005.

\bibitem[Sutton and Barto(2018)]{sutton2018reinforcement}
Richard~S. Sutton and Andrew~G. Barto.
\newblock \emph{Reinforcement Learning: An Introduction}.
\newblock MIT press, second edition, 2018.

\bibitem[Todorov et~al.(2012)Todorov, Erez, and Tassa]{todorov2012mujoco}
Emanuel Todorov, Tom Erez, and Yuval Tassa.
\newblock {MuJoCo}: A physics engine for model-based control.
\newblock In \emph{Proceedings of the {IEEE/RSJ} International Conference on
  Intelligent Robots and Systems}, pages 5026--5033, 2012.
\newblock URL \url{https://doi.org/10.1109/IROS.2012.6386109}.

\bibitem[Towers et~al.(2023)Towers, Terry, Kwiatkowski, Balis, Cola, Deleu,
  Goulão, Kallinteris, KG, Krimmel, et~al.]{towers-2023-gymnaisum}
Mark Towers, Jordan~K. Terry, Ariel Kwiatkowski, John~U. Balis, Gianluca~de
  Cola, Tristan Deleu, Manuel Goulão, Andreas Kallinteris, Arjun KG, Markus
  Krimmel, et~al.
\newblock Gymnasium, 2023.
\newblock URL \url{https://github.com/Farama-Foundation/Gymnasium}.

\bibitem[van Zundert and Oomen(2018)]{Zundert2018}
Jurgen van Zundert and Tom Oomen.
\newblock On inversion-based approaches for feedforward and {ILC}.
\newblock \emph{Mechatronics}, 50:\penalty0 282--291, 2018.
\newblock URL \url{https://doi.org/10.1016/j.mechatronics.2017.09.010}.

\bibitem[Verscheure et~al.(2009)Verscheure, Demeulenaere, Swevers, De~Schutter,
  and Diehl]{verscheure-2009-time}
Diederik Verscheure, Bram Demeulenaere, Jan Swevers, Joris De~Schutter, and
  Moritz Diehl.
\newblock Time-optimal path tracking for robots: A convex optimization
  approach.
\newblock \emph{IEEE Transactions on Automatic Control}, 54\penalty0
  (10):\penalty0 2318--2327, 2009.
\newblock URL \url{https://doi.org/10.1109/TAC.2009.2028959}.

\bibitem[Vlastelica et~al.(2020)Vlastelica, Paulus, Musil, Martius, and
  Rolínek]{pogancic2020}
Marin Vlastelica, Anselm Paulus, Vít Musil, Georg Martius, and Michal
  Rolínek.
\newblock Differentiation of blackbox combinatorial solvers.
\newblock In \emph{Proceedings of the Eighth International Conference on
  Learning Representations}, 2020.
\newblock URL \url{https://openreview.net/forum?id=BkevoJSYPB}.

\bibitem[Wright et~al.(2022)Wright, Onodera, Stein, Wang, Schachter, Hu, and
  McMahon]{Wright2022}
Logan~G. Wright, Tatsuhiro Onodera, Martin~M. Stein, Tianyu Wang, Darren~T.
  Schachter, Zoey Hu, and Peter~L. McMahon.
\newblock Deep physical neural networks trained with backpropagation.
\newblock \emph{Nature}, 601\penalty0 (7894):\penalty0 549--555, 2022.
\newblock URL \url{https://doi.org/10.1038/s41586-021-04223-6}.

\end{thebibliography}
\appendix
\counterwithin{algocf}{section}

\iftoggle{noappendix}{
\refstepcounter{section}\label{app:related-work}
}{
\clearpage
\section{Related work}\label{app:related-work}
Historically, control theory has dealt with both closed-loop and open-loop control, and there is a broad consensus in the control community that both are important \citep{aastrom2021feedback,skogestad2005multivariable,astrom1995pid,betts2010practical,verscheure-2009-time,horn-2013-numerical}.
A simple application of open-loop methods is the control of electric stoves with \emph{simmerstats}, which regulate the temperature by periodically switching the power on and off.
Open-loop control is also applied to much more challenging problems, such as the regulation of drinking-water treatment plants \citep{gamiz-2020-feed} or plasmas in a tokamak \citep{mattei-2006-open}.
These problems have also been of interest to the reinforcement learning community \citep{janjua2024gvfs,degrave2022magnetic}.\looseness=-1

The numerical solution of trajectory optimization problems has also been studied in machine learning \citep{schaal2010,howe2022myriad}.
As in our approach, an important aspect of these methods is to exploit the Markovian structure of the dynamics to reduce computation \citep{Carraro2015,diehl2007}.
However, in contrast to the RL setting that we consider, existing methods address situations where the dynamics are known.
Another set of related methods is known as \emph{iterative learning control} \citep{moore1993,ma2022,ma2023}, which is a control-theoretic framework that iteratively improves the execution of a task by optimizing over feedforward trajectories.
However, these methods are often formulated for \textit{trajectory tracking} tasks, while we consider a more general class of reinforcement learning problems.
\Citet{chen-2019-hardware} explore an idea similar to that in our \cref{alg:model-based}; their model-based control algorithm combines a rollout in a real system with an inaccurate model to construct an iterative LQR feedback controller.
A combination of open-loop and closed-loop methods in the context of reinforcement learning is explored by \citet{hansen1996reinforcement}.\looseness=-1

\citet{raffin2023simple} have recently proposed to use open-loop control as a baseline to compare to more complex deep reinforcement learning methods.
They argue, as do we, that deep RL methods have become very complex, and that open-loop solutions may be favored for certain tasks due to their simplicity.
Their approach is to combine a gradient-free optimization method (CMA-ES, very similar to our baseline CEM), with prior knowledge about the problem (instead of letting the $u_t$ be completely free, they optimize the parameters of nonlinear oscillators).
The authors express their surprise about the good performance of the open-loop method compared to state-of-the-art deep RL algorithms on certain MuJoCo tasks, similar to the ones we consider in \cref{sec:mjc} \citep[cf.][p.~8]{raffin2023simple}.\looseness=-1

Recently, deep neural networks have been used to learn representations of complex dynamical systems \citep{fragkiadaki2015recurrent} and Pontryagin's principle was leveraged in the optimization of control tasks based on such models \citep{jin2020pontryagin,bottcher2022ai}.
However, these methods only consider the setting of closed-loop control.
The combination of an exact forward pass with an approximate backward pass, which our methods are based on, has also been explored in different settings in the deep learning literature, such as spiking \citep{lee-2016-training} or physical \citep{Wright2022} neural networks, or networks that include nondifferentiable procedures, for example used for rendering \citep{Niemeyer2020CVPR} or combinatorial optimization \citep{pogancic2020}.
The analysis of \cref{app:proof} that we developed for our convergence result (\cref{thm:main}) could also be relevant for these applications, as the fundamental structure (backpropagation with an accurate forward pass and an inaccurate backward pass) is identical.

}

\iftoggle{noappendix}{
\refstepcounter{section}\label{app:algorithms}
\refstepcounter{algocf}\label{alg:model-based}
\refstepcounter{algocf}\label{alg:on-policy}
\refstepcounter{algocf}\label{alg:off-policy}
}{
\section{Algorithms}\label{app:algorithms}
In this section, we provide detailed descriptions of the three open-loop RL algorithms presented in the main text.
The model-based algorithm of \cref{sec:model-based} is listed in \cref{alg:model-based}, the model-free on-trajectory method of \cref{sec:on-policy} is listed in \cref{alg:on-policy}, and the off-trajectory method of \cref{sec:off-policy} is listed in \cref{alg:off-policy}.
The hyperparameters we use in these algorithms are discussed in \cref{app:hyper}.
\begin{algorithm2e}
  \caption{Model-based open-loop RL}\label{alg:model-based}
  \KwIn{Differentiable model $\tilde f: \mathcal X \times \mathcal U \to \mathcal X$, optimization steps $N \in \mathbb N$, step size $\eta > 0$}
  Initialize $u_{0:T-1}$ (initial action sequence)\\
  \For{$k = 1, 2, \dots, N$}{
      \tcp{Forward pass}
      $x_{0:T} \gets \operatorname{rollout}(u_{0:T-1})$\\
      \BlankLine
      \tcp{Backward pass}
      $\tilde\lambda_T \gets \nabla r_T(x_T)$\\
      \For{$t = T-1, T-2, \dots, 0$}{
          $\tilde\lambda_t \gets \nabla_x r(x_t, u_t) + \nabla_x\tilde f(x_t, u_t)\tilde\lambda_{t+1}$\\
          $g_t \gets \nabla_u r(x_t, u_t) + \nabla_u\tilde f(x_t, u_t)\tilde\lambda_{t+1}$\\
          $u_t \gets u_t + \eta g_t$\hspace{1em}\tcp{Gradient ascent}
      }
  }
\end{algorithm2e}
\begin{algorithm2e}
    \caption{Model-free on-trajectory open-loop RL}\label{alg:on-policy}
    \KwIn{Number of rollouts $M \in \mathbb N$, noise scale $\sigma > 0$, optimization steps $N \in \mathbb N$,\newline step size $\eta > 0$}
    Initialize $\bar u_{0:T-1}$ (initial action sequence)\\
    \For{$k = 1, 2, \dots, N$}{
        \tcp{Forward passes}
        $\bar x_{0:T} \gets \operatorname{rollout}(\bar u_{0:T-1})$\\
        \For{$i = 1, 2, \dots, M$}{
            $u_{0:T-1}^{(i)} \sim \mathcal N(\bar u_{0:T-1}, \sigma I)$\\
            $x_{0:T}^{(i)} \gets \operatorname{rollout}(u_{0:T-1}^{(i)})$\\
            $\Delta u_{0:T-1}^{(i)} \gets u_{0:T-1}^{(i)} - \bar u_{0:T-1}$\\
            $\Delta x_{0:T}^{(i)} \gets x_{0:T}^{(i)} - \bar x_{0:T}$
        }
        \BlankLine
        \tcp{Backward pass}
        $\tilde\lambda_T \gets \nabla r_T(\bar x_T)$\\
        \For{$t = T-1, T-2, \dots, 0$}{
            \tcp{Jacobian estimation}
            $A_t, B_t \gets \argmin_{A_t \in \mathbb R^{D\times D}, B_t \in \mathbb R^{K\times D}}
            \sum_{i=1}^M \norm{A_t^\top \Delta x_t^{(i)} + B_t^\top \Delta u_t^{(i)} - \Delta x_{t+1}^{(i)}}^2$\\
            \BlankLine
            \tcp{Pontryagin update}
            $\tilde\lambda_t \gets \nabla_x r(\bar x_t, \bar u_t) + A_t\tilde\lambda_{t+1}$\\
            $g_t \gets \nabla_u r(\bar x_t, \bar u_t) + B_t\tilde\lambda_{t+1}$\\
            $\bar u_t \gets \bar u_t + \eta g_t$\hspace{1em}\tcp{Gradient ascent}
        }
    }
\end{algorithm2e}
\begin{algorithm2e}
    \caption{Model-free off-trajectory open-loop RL}\label{alg:off-policy}
    \KwIn{Forgetting factor $\alpha \in [0, 1]$, noise scale $\sigma > 0$, initial precision $q_0 > 0$,\newline optimization steps $N \in \mathbb N$, step size $\eta > 0$}
    Initialize $\bar u_{0:T-1}$ (initial action sequence)\\
    Initialize $F_t \in \mathbb R^{D \times (D + K + 1)}, \forall t \in [T - 1]_0$\\
    $Q_t \gets q_0 I \in \mathbb R^{(D + K + 1) \times (D + K + 1)}, \forall t \in [T-1]_0$\\
    \For{$k = 1, 2, \dots, N$}{
        \tcp{Forward pass}
        $u_{0:T-1} \sim \mathcal N(\bar u_{0:T-1}, \sigma I)$\\
        $x_{0:T} \gets \operatorname{rollout}(u_{0:T-1})$\\
        \BlankLine
        \tcp{Backward pass}
        $\tilde\lambda_T \gets \nabla r_T(x_T)$\\
        \For{$t = T-1, T-2, \dots, 0$}{
            \tcp{Jacobian estimation}
            $z_t \gets [x_t^\top\ u_t^\top\ 1]^\top$\\
            $Q_t \gets \alpha Q_t + (1 - \alpha) q_0 I + z_t z_t^\top$\\
            $F_t \gets F_t + Q_t^{-1} z_t (x_{t+1} - F_t z_t)^\top$\\
            $[A_t^\top\ B_t^\top\ c_t] \gets F_t$\\
            \BlankLine
            \tcp{Pontryagin update}
            $\tilde\lambda_t \gets \nabla_x r(x_t, u_t) + A_t\tilde\lambda_{t+1}$\\
            $g_t \gets \nabla_u r(x_t, u_t) + B_t\tilde\lambda_{t+1}$\\
            $\bar u_t \gets \bar u_t + \eta g_t$\hspace{1em}\tcp{Gradient ascent}
        }
    }
\end{algorithm2e}
}

\iftoggle{noappendix}{
\refstepcounter{section}\label{app:lagrange}
}{
\section{Derivation of Pontryagin's principle}\label{app:lagrange}
In this section, we will derive Pontryagin's principle, equations \cref{eq:gradient,eq:costate1,eq:costate2}, using the method of Lagrange multipliers.
In the following we view the objective $J$ as a function of states and actions, that is
\begin{equation*}
    J(x_{0:T}, u_{0:T-1}) \doteq \sum_{t=0}^{T-1} r(x_t, u_t) + r_T(x_T).
\end{equation*}
We maximize $J$ with respect to $x_{0:T}$ and $u_{0:T-1}$ subject to the constraint that $x_{t+1} = f(x_t, u_t)$ for all $t = [T-1]_0$.
The corresponding Lagrangian is
\begin{equation*}
    L(x_{0:T}, u_{0:T-1}, \lambda_{1:T}) \doteq \sum_{t=0}^{T-1}\{r(x_t, u_t) + \lambda_{t+1}^\top (f(x_t, u_t) - x_{t+1})\} + r_T(x_T)\text,
\end{equation*}
where the constraints are included through the multipliers $\lambda_{1:T}$.
The costate equations are then obtained by setting the partial derivatives of the Lagrangian with respect to $x_{0:T}$ to zero:
\begin{alignat*}{3}
    &&\nabla_{x_t} L &\,=\,\nabla_x r(x_t, u_t) + \nabla_x f(x_t, u_t) \lambda_{t + 1} - \lambda_t \doteq 0\\
    &\implies\quad& \lambda_t &\,=\,\nabla_x r(x_t, u_t) + \nabla_x f(x_t, u_t) \lambda_{t + 1}\\
    &&\nabla_{x_T} L &\,=\,\nabla r_T(x_T) - \lambda_T \doteq 0\\
    &\implies\quad& \lambda_T &\,=\,\nabla r_T(x_T).
\end{alignat*}
Setting the partial derivatives of the Lagrangian with respect to $\lambda_{1:T}$ to zero yields the dynamics equations, and the partial derivatives of the Lagrangian with respect to $u_{0:T-1}$ are
\begin{equation*}
    \nabla_{u_t} L = \nabla_u r(x_t, u_t) + \nabla_u f(x_t, u_t) \lambda_{t + 1}\text,
\end{equation*}
which is the same expression for the gradient of the objective as in \cref{eq:gradient}.
}

\iftoggle{noappendix}{
\refstepcounter{section}\label{app:backprop}
}{
\section{Pontryagin's principle from backpropagation}\label{app:backprop}
In \cref{sec:model-based}, we mention that an application of the backpropagation algorithm (i.e., a repeated application of the chain rule) to the objective $J$ leads naturally to Pontryagin's principle.
We have
\begin{align*}
    J(u_{0:T-1}) &= \sum_{t=0}^{T-1} r(x_t, u_t) + r_T(x_T)\\
    &= \begin{multlined}[t]
        r(x_0, u_0) + r\{\underbrace{f(x_0, u_0)}_{x_1}, u_1\} + r\{\underbrace{f(f(x_0, u_0), u_1)}_{x_2}, u_2\}+ \cdots\\
        + r\{\underbrace{f(f(\cdots f(f(x_0, u_0), u_1)\cdots), u_{T-2})}_{x_{T-1}}, u_{T-1}\}\\
        + r_T\{\underbrace{f(f(\cdots f(f(x_0, u_0), u_1)\cdots), u_{T-1})}_{x_{T}}\}\text.
    \end{multlined}
\end{align*}
The chain rule states that for $g: \mathbb R^n \to \mathbb R^k$, $h: \mathbb R^k \to \mathbb R^m$ and $x \in \mathbb R^n$,
\[
    \nabla (h \circ g)(x) = \nabla g(x)\,\nabla h\{g(x)\}\text,
\]
where $\nabla g: \mathbb R^n \to \mathbb R^{n \times k}$, $\nabla h: \mathbb R^k \to \mathbb R^{k \times m}$ and $\nabla (h \circ g): \mathbb R^n \to \mathbb R^{n \times m}$.
From this, we can compute the gradient of the objective function with respect to the action $u_t$ at time $t \in [T-1]_0$ as
\begin{align*}
\nabla_{u_t} J(u_{0:T-1}) &= \begin{aligned}[t]
    &\nabla_u r(x_t, u_t) + \nabla_u f(x_t, u_t)\textcolor{blue}{\nabla_x r(x_{t+1}, u_{t+1})}\\
    &\quad+ \nabla_u f(x_t, u_t)\textcolor{blue}{\nabla_x f(x_{t+1}, u_{t+1})\nabla_x r(x_{t+2}, u_{t+2})}\\
    &\quad+ \cdots\\
    &\quad+ \nabla_u f(x_t, u_t)\textcolor{blue}{\nabla_x f(x_{t+1}, u_{t+1})\cdots \nabla_x f(x_{T-2}, u_{T-2}) \nabla_x r(x_{T-1}, u_{T-1})}\\
    &\quad+ \nabla_u f(x_t, u_t) \textcolor{blue}{\nabla_x f(x_{t+1}, u_{t+1}) \cdots \nabla_x f(x_{T-1}, u_{T-1}) \nabla r_T(x_{T}) }
\end{aligned}\\
&= \nabla_u r(x_t, u_t) + \nabla_u f(x_t, u_t) \lambda_{t+1}\text,
\end{align*}
where we have introduced the shorthand $\lambda_{t+1}$ for the blue part.
This is the same expression for the gradient as in \cref{eq:gradient}, and it can easily be seen that this definition of $\lambda_{t}$ satisfies the costate equations \cref{eq:costate1,eq:costate2}.
}

\iftoggle{noappendix}{
\refstepcounter{section}\label{app:proof}
}{
\section{Proof of \texorpdfstring{\cref{thm:main}}{Theorem~\ref{thm:main}}}\label{app:proof}
In this section, we prove \cref{thm:main}, our convergence result of \cref{alg:pontryagin}.
The main part of the proof is contained in the proof of \cref{thm:big}, which provides a lower bound for the inner product between the approximate and true gradients as well as an upper bound for the norm of the approximate gradients.
Intuitively, this theorem turns \cref{ass:error}, which is a statement about the error of the approximate Jacobians, into a statement about the error of the approximate gradient.
We then show that \cref{thm:main} follows by making use of the $L$-smoothness (\cref{ass:smooth}) of the objective function.
This latter part is a standard result in the analysis of stochastic gradient methods \citep[e.g.,][]{bottou2018optimization}.

Before coming to the main result, we introduce the following shorthand notation.
Given a fixed trajectory $(u_{0:T-1}, x_{0:T})$, we define
\begin{equation*}
\nabla J_t \doteq \nabla_u f_t \lambda_{t+1}\text,\quad
\varepsilon_t \doteq A_t - \nabla_x f_t\text,
\quad \varepsilon_t' \doteq B_t - \nabla_u f_t\quad\text{and}\quad
\delta_{t+1} \doteq \tilde\lambda_{t+1} - \lambda_{t+1}
\end{equation*}
for all times $t \in [T-1]_0$.
By \cref{eq:gradient,ass:reward}, the first quantity defines the true gradient and the approximate gradient is given by $g_t = B_t \tilde\lambda_{t+1}$.
We also state two small lemmas, which we will use routinely in the following proof.

\begin{lemma}\label{lem:cauchy}
    Let $x, y \in \mathbb R^n$ for $n \in \mathbb N$ and $\alpha \in \mathbb R$ such that $\norm{x} \leq \alpha \norm{y}$. Then, $|x^\top y| \leq \alpha \norm{y}^2$.
\end{lemma}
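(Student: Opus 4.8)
The plan is to derive this directly from the Cauchy--Schwarz inequality. First I would apply Cauchy--Schwarz to obtain $|x^\top y| \le \norm{x}\,\norm{y}$, valid for any pair of vectors in $\mathbb R^n$. The hypothesis gives $\norm{x} \le \alpha\norm{y}$, and since $\norm{y} \ge 0$, multiplying both sides by $\norm{y}$ preserves the inequality and yields $\norm{x}\,\norm{y} \le \alpha\norm{y}^2$. Chaining the two estimates gives $|x^\top y| \le \alpha\norm{y}^2$, which is the claim.

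A brief remark on the hypotheses: although $\alpha$ is allowed to be any real number, the assumption $0 \le \norm{x} \le \alpha\norm{y}$ already forces $\alpha\norm{y} \ge 0$. Hence either $\alpha \ge 0$, or $\norm{y} = 0$, in which case $\norm{x} = 0$ as well and both sides of the conclusion vanish. So no case distinction is actually needed, and the multiplication step above is legitimate in every situation.

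There is no real obstacle here: the lemma is a one-line consequence of Cauchy--Schwarz. It is isolated as a standalone statement only for convenience, since the pattern ``an error term bounded in norm by $\alpha$ times the true gradient, paired against that gradient'' recurs repeatedly in the backward-pass analysis of \cref{app:proof}, with $x$ playing the role of an error contribution to the approximate gradient $g_t$ (such as $\varepsilon_t'\lambda_{t+1}$ or $B_t\delta_{t+1}$, using the shorthand introduced there) and $y$ the role of the true gradient $\nabla J_t$.
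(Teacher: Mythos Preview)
Your proof is correct and matches the paper's own argument essentially line for line: both multiply the hypothesis by $\norm{y}\ge 0$ and then invoke Cauchy--Schwarz. Your added remark about the sign of $\alpha$ and the degenerate case $\norm{y}=0$ is a nice clarification the paper omits, but the core reasoning is identical.
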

\begin{proof}
    \vspace{-1em}
    $\norm{x} \leq \alpha \norm{y} \implies \norm{x} \norm{y} \leq \alpha \norm{y}^2 \implies |x^\top y| \leq \alpha \norm{y}^2$ (by Cauchy-Schwarz).
\end{proof}

\begin{lemma}\label{lem:sigma}
    Let $A, B \in \mathbb R^{m\times n}$ for some $m, n \in \mathbb N$ and $x, y \in \mathbb R^n$ such that $\smax(A) \norm{x} \leq \smin(B) \norm{y}$. Then, $\norm{Ax} \leq \norm{By}.$
\end{lemma}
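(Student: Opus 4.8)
The plan is to derive the inequality from the two elementary singular-value bounds that relate $\norm{Mv}$ to $\norm v$. First I would recall that for any $M \in \mathbb R^{m\times n}$ and any $v \in \mathbb R^n$,
\[
    \smin(M)\,\norm v \;\le\; \norm{Mv} \;\le\; \smax(M)\,\norm v .
\]
The upper bound is just the definition of the spectral norm $\norm M = \smax(M)$. The lower bound follows from $\norm{Mv}^2 = v^\top M^\top M v \ge \lambda_{\min}(M^\top M)\,\norm v^2$ together with the identity $\lambda_{\min}(M^\top M) = \smin(M)^2$; here I would adopt the convention that $\smin(M)$ denotes the $n$-th largest singular value (so that it equals $0$ when $n > m$), which makes the lower bound valid for every $v \in \mathbb R^n$ --- and in every place where \cref{lem:sigma} is later invoked the relevant matrix has full column rank, so $\smin$ is the genuine least singular value.

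With these two bounds in hand, the proof is a one-line chain. Applying the upper bound to the pair $(A,x)$ gives $\norm{Ax} \le \smax(A)\norm x$, and applying the lower bound to the pair $(B,y)$ gives $\smin(B)\norm y \le \norm{By}$. Combining these with the hypothesis $\smax(A)\norm x \le \smin(B)\norm y$ yields
\[
    \norm{Ax} \;\le\; \smax(A)\norm x \;\le\; \smin(B)\norm y \;\le\; \norm{By},
\]
which is exactly the claim.

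There is no real obstacle here: \cref{lem:sigma} is a small convenience lemma, playing the same role for singular-value comparisons that \cref{lem:cauchy} plays for the Cauchy-Schwarz step, and it will be used repeatedly in \cref{app:proof} to turn an inequality between products of norms into an inequality between the images of two linear maps --- typically with $A$ a Jacobian-estimation error and $B$ the corresponding true Jacobian. The only point that warrants a moment's care is the rectangular case of the lower singular-value bound $\smin(B)\norm y \le \norm{By}$, which is why I would state the singular-value convention explicitly before using it.
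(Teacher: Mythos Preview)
Your argument is correct and is essentially the same as the paper's: both establish $\norm{Ax}\le\smax(A)\norm{x}$ and $\smin(B)\norm{y}\le\norm{By}$ (the paper via the Courant--Fischer characterization, you via the spectral norm and the Rayleigh-quotient bound for $B^\top B$) and then chain these with the hypothesis. Your explicit remark on the convention for $\smin$ in the rectangular case is a reasonable addition but does not change the approach.
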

\begin{proof}
    This is a simple corollary of the Courant-Fischer (min-max) theorem. The min-max theorem states that, for a symmetric matrix $C \in \mathbb R^{n\times n}$, the minimum and maximum eigenvalues $\underaccent{\bar}{\lambda}(C)$ and $\bar\lambda(C)$ are characterized in the following way:
    \[
        \underaccent{\bar}{\lambda}(C) = \min_{\substack{z\in\mathbb R^n\\\norm{z}=1}} z^\top C z \quad \text{and} \quad \bar{\lambda}(C) = \max_{\substack{z\in\mathbb R^n\\\norm{z}=1}} z^\top C z.
    \]
    This can be extended to a characterization of the singular values $\smax(A)$ and $\smin(B)$ by relating them to the eigenvalues of $A^\top A$ and $B^\top B$, respectively:
    \begin{align*}
        \smax(A) &= \sqrt{\bar{\lambda}(A^\top A)} = \max_{\substack{z\in\mathbb R^n\\\norm{z}=1}} \sqrt{z^\top A^\top A z} = \max_{\substack{z\in\mathbb R^n\\\norm{z}=1}} \norm{Az} \geq \frac{1}{\norm{x}}\norm{Ax}\text,\\
        \smin(B) &= \sqrt{\underaccent{\bar}{\lambda}(B^\top B)} = \min_{\substack{z\in\mathbb R^n\\\norm{z}=1}} \sqrt{z^\top B^\top B z} = \min_{\substack{z\in\mathbb R^n\\\norm{z}=1}} \norm{Bz} \leq \frac{1}{\norm{y}}\norm{By}.
    \end{align*}
    Combining these inequalities, we get:
    \begingroup
    \belowdisplayskip=0pt
    \[
        \norm{Ax} \leq \smax(A) \norm{x} \leq \smin(B) \norm{y} \leq \norm{By}.
    \]
    \endgroup
\end{proof}

\begin{theorem}\label{thm:big}
Suppose Assumptions \ref{ass:reward} and \ref{ass:error} hold with $\gamma$ and $\zeta$ and define $\mu \doteq 1 - \gamma - \zeta - \gamma\zeta$ and $\nu \doteq 1 + \gamma + \zeta + \gamma\zeta$. Then,
\[
    g_t^\top \nabla J_t \geq \mu\norm{\nabla J_t}^2 \quad \text{and} \quad \norm{g_t} \leq \nu\norm{\nabla J_t},
\]
for all $t \in [T-1]_0$.
\end{theorem}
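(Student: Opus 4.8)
\textbf{Proof proposal for \cref{thm:big}.}
The plan is to route everything through a single estimate on the gradient discrepancy $g_t-\nabla J_t$. Indeed, once we establish $\norm{g_t-\nabla J_t}\le(\gamma+\zeta+\gamma\zeta)\norm{\nabla J_t}=(1-\mu)\norm{\nabla J_t}$, both conclusions follow at once: expanding $g_t^\top\nabla J_t=\norm{\nabla J_t}^2+(g_t-\nabla J_t)^\top\nabla J_t$ and invoking \cref{lem:cauchy} gives $g_t^\top\nabla J_t\ge\mu\norm{\nabla J_t}^2$, while the triangle inequality gives $\norm{g_t}\le(2-\mu)\norm{\nabla J_t}=\nu\norm{\nabla J_t}$ (note $\nu=2-\mu$). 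So the whole proof reduces to bounding $\norm{g_t-\nabla J_t}$.

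To that end, \cref{ass:reward} collapses the recursions to $\lambda_t=\nabla_x f_t\lambda_{t+1}$, $\tilde\lambda_t=A_t\tilde\lambda_{t+1}$, $g_t=B_t\tilde\lambda_{t+1}$ and $\nabla J_t=\nabla_u f_t\lambda_{t+1}$, whence the costate error obeys $\delta_t=\varepsilon_t\lambda_{t+1}+A_t\delta_{t+1}$ with $\delta_T=0$. Unrolling this backward recursion yields the closed form $\delta_{t+1}=\sum_{s=1}^{T-1-t}\bigl(\prod_{i=t+1}^{t+s-1}A_i\bigr)\varepsilon_{t+s}\lambda_{t+s+1}$, and expanding $B_t\tilde\lambda_{t+1}=(\nabla_u f_t+\varepsilon_t')(\lambda_{t+1}+\delta_{t+1})$ gives the clean split $g_t-\nabla J_t=\nabla_u f_t\delta_{t+1}+\varepsilon_t'\tilde\lambda_{t+1}$. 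The second term is handled immediately by \cref{ass:uerror} together with $\norm{\tilde\lambda_{t+1}}\le\norm{\lambda_{t+1}}+\norm{\delta_{t+1}}$: since $\smin(\nabla_u f_t)\norm{\lambda_{t+1}}\le\norm{\nabla_u f_t\lambda_{t+1}}=\norm{\nabla J_t}$ and $\smin(\nabla_u f_t)\norm{\delta_{t+1}}\le\norm{\nabla_u f_t\delta_{t+1}}$, we get $\norm{\varepsilon_t'\tilde\lambda_{t+1}}\le\zeta\norm{\nabla J_t}+\zeta\norm{\nabla_u f_t\delta_{t+1}}$. So everything now hinges on estimating $\norm{\nabla_u f_t\delta_{t+1}}$.

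This is the crux, and I would do it term by term in the sum above. For the $s$-th summand, bound $\norm{\nabla_u f_t}=\smax(\nabla_u f_t)$; bound each factor $\norm{A_i}\le\tfrac{4}{3}\smax(\nabla_x f_i)$, which follows from \cref{ass:xerror} and $\gamma<1$ since the stated bound forces $\norm{\varepsilon_i}\le\tfrac{1}{3}\smax(\nabla_x f_i)$; use the full bound of \cref{ass:xerror} on $\norm{\varepsilon_{t+s}}$; and bound $\norm{\lambda_{t+s+1}}$ via the ``costate growth'' inequality $\bigl(\prod_{l=1}^{s}\smin(\nabla_x f_{t+l})\bigr)\norm{\lambda_{t+s+1}}\le\norm{\lambda_{t+1}}$, obtained by iterating $\smin(\nabla_x f_{t+l})\norm{\lambda_{t+l+1}}\le\norm{\lambda_{t+l}}$. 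The design of \cref{ass:xerror} is \emph{exactly} what makes this work: the condition-number ratios $\smin(\nabla_x f_{t+l})/\smax(\nabla_x f_{t+l})$ and $\smin(\nabla_u f_t)/\smax(\nabla_u f_t)$ it carries cancel precisely against the $\smax$-factors from the $\norm{A_i}$'s and from $\norm{\nabla_u f_t}$, and against the $\smin$-products from the costate-growth bound, leaving just $\smin(\nabla_u f_t)\norm{\lambda_{t+1}}\le\norm{\nabla J_t}$ times the scalar $\gamma\,(4/3)^{s-1}3^{-s}=\tfrac{\gamma}{3}(4/9)^{s-1}$. Summing the geometric series over $s\ge1$ gives $\norm{\nabla_u f_t\delta_{t+1}}\le\tfrac{3\gamma}{5}\norm{\nabla J_t}$, and hence $\norm{g_t-\nabla J_t}\le(1+\zeta)\tfrac{3\gamma}{5}\norm{\nabla J_t}+\zeta\norm{\nabla J_t}\le(\gamma+\zeta+\gamma\zeta)\norm{\nabla J_t}$, which closes the argument.

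I expect the main obstacle to be precisely this last bookkeeping step: one has to verify that the base time and offsets in \cref{ass:xerror} line up with the indices produced by unrolling the recursion, that the telescoping of singular-value products stays valid even in degenerate cases where some $\smin(\nabla_x f_{t+l})$ (or $\smin(\nabla_u f_t)$) vanishes—there \cref{ass:xerror} forces the corresponding $\varepsilon$'s to vanish as well, so nothing blows up—and that the interplay between the $3^{-s}$ decay in \cref{ass:xerror} and the factor $\tfrac{4}{3}$ lost per step to $\norm{A_i}$ leaves a geometric series summing to a constant strictly below $1$, which is what reserves the slack needed for the $\zeta$ contribution. An alternative to the explicit product cancellation is to apply \cref{lem:sigma} repeatedly down the recursion, which hides the singular-value products but requires the same care with indices.
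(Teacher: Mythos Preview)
Your proof is correct. The overall strategy—reduce both conclusions to a bound on the costate error and exploit the singular-value structure of \cref{ass:error}—matches the paper, but your execution differs in two respects worth noting. First, you bound $\norm{g_t-\nabla J_t}$ once and derive both inequalities from that, whereas the paper decomposes $g_t^\top\nabla J_t$ and $\norm{g_t}$ into four pieces each ($\nabla_u f_t\delta_{t+1}$, $\varepsilon_t'\lambda_{t+1}$, $\varepsilon_t'\delta_{t+1}$, plus $\nabla J_t$) and bounds them separately; your two-term split $\nabla_u f_t\delta_{t+1}+\varepsilon_t'\tilde\lambda_{t+1}$ is a cleaner packaging of the same content. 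Second, for the key costate-error estimate the paper proceeds by backward induction on $s$, proving $\norm{\delta_{t+s}}\le\frac{\gamma}{3^{s-1}}\kappa^{-1}(\nabla_u f_t)\prod_{i=1}^{s-1}\kappa^{-1}(\nabla_x f_{t+i})\norm{\lambda_{t+s}}$ for all $s$ and then specializing to $s=1$ to get $\norm{\nabla_u f_t\delta_{t+1}}\le\gamma\norm{\nabla J_t}$; you instead unroll the recursion explicitly and sum the geometric series, arriving at the sharper $\norm{\nabla_u f_t\delta_{t+1}}\le\frac{3\gamma}{5}\norm{\nabla J_t}$. The induction buys the paper a bound on $\norm{\delta_{t+s}}$ at every $s$ (potentially reusable), at the cost of losing a factor of $3$ per step where you only lose $4/3$; your direct approach is tighter and slightly shorter, and your anticipated ``alternative'' of applying \cref{lem:sigma} repeatedly is essentially the paper's route.
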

\begin{proof}
    \vspace{-1em}
    Let $t \in [T-1]_0$ be fixed.
    Decomposing the left-hand side of the first inequality, we get
    \begin{align*}
        g_t^\top \nabla J_t &= \tilde\lambda_{t+1}^\top B_t^\top \nabla_u f_t \lambda_{t+1}\\
        &= (\lambda_{t+1} + \delta_{t+1})^\top (\nabla_u f_t + \varepsilon_t')^\top \nabla_u f_t \lambda_{t+1}\\
        &= \norm{\nabla J_t}^2 + \underbrace{\lambda_{t+1}^\top \varepsilon_t'^\top \nabla_u f_t \lambda_{t+1}}_{a} + \underbrace{\delta_{t+1}^\top \nabla_u f_t^\top \nabla_u f_t \lambda_{t+1}}_{b} + \underbrace{\delta_{t+1}^\top \varepsilon_t'^\top \nabla_u f_t \lambda_{t+1}}_{c}\\
        &\geq \norm{\nabla J_t}^2 - |a| - |b| - |c|.
    \end{align*}
    We will now show that
    \[
        |a| \leq \zeta \norm{\nabla J_t}^2 \quad \text{and} \quad |b| \leq \gamma \norm{\nabla J_t}^2 \quad \text{and} \quad |c| \leq \gamma\zeta \norm{\nabla J_t}^2\text,
    \]
    which, when taken together, will give us
    \[
        g_t^\top \nabla J_t \geq (1 - \gamma - \zeta - \gamma\zeta)\norm{\nabla J_t}^2 = \mu\norm{\nabla J_t}^2.
    \]
    We first derive the bound on $|a|$:
    \begin{alignat}{4}
        &&&&\smax(\varepsilon_{t}') &\,\leq\,\zeta \smin(\nabla_u f_{t}) &&\qquad \text{(\cref{ass:uerror})}\nonumber\\
        &&\implies\quad &&\norm{\varepsilon_t'\lambda_{t+1}} &\,\leq\,\zeta\norm{\nabla_u f_t\lambda_{t+1}} &&\qquad \text{(\cref{lem:sigma})}\label{eq:zeta}\\
        &&\implies\quad &&|\underbrace{\lambda_{t+1}^\top\varepsilon_t'^\top\nabla_u f_t\lambda_{t+1}}_a| &\,\leq\,\zeta\norm{\nabla J_t}^2. &&\qquad \text{(\cref{lem:cauchy})}\nonumber
    \end{alignat}
    The expression for $b$ involves $\delta_{t+1}$, which is the error of the approximate costate $\tilde\lambda_{t+1}$.
    This error comes from the cumulative error build-up due to $\varepsilon_{t+1:T-1}$, the errors of the approximate Jacobians used in the backward pass.
    To bound $|b|$ we therefore first need to bound this error build-up.
    To this end, we now show that for all $s \in [T-t]$,
    \begin{equation}\label{eq:delta}
        \norm{\delta_{t+s}} \leq \frac{\gamma}{3^{s-1}}\kappa^{-1}(\nabla_u f_t)\prod_{i=1}^{s-1}\kappa^{-1}(\nabla_x f_{t+i})\norm{\lambda_{t+s}}\text,
    \end{equation}
    where we write the inverse condition number of a matrix $A$ as $\kappa^{-1}(A) \doteq \smin(A) / \smax(A)$.
    To prove this bound, we perform a backward induction on $s$. First, consider $s = T - t$. The right-hand side of \cref{eq:delta} is clearly nonnegative. The left-hand side is
    \[
        \norm{\delta_T} = \norm{\tilde\lambda_T - \lambda_T} = 0\text,
    \]
    as $\tilde\lambda_T = \lambda_T$.
    Thus, the inequality holds for $s = T - t$.
    We now complete the induction by showing that it holds for any $s \in [T - t - 1]$, assuming that it holds for $s + 1$.
    We start by decomposing $\delta_{t+s}$:
    \begin{align*}
        \delta_{t+s} &= \tilde\lambda_{t+s} - \lambda_{t+s}\\
        &= A_{t+s} \tilde\lambda_{t+s+1} - \nabla_x f_{t+s} \lambda_{t+s+1}\\
        &= (\nabla_x f_{t+s} + \varepsilon_{t+s})(\lambda_{t+s+1} + \delta_{t+s+1}) - \nabla_x f_{t+s} \lambda_{t+s+1}\\
        &= \varepsilon_{t+s} \lambda_{t+s+1} + \nabla_x f_{t+s} \delta_{t+s+1} + \varepsilon_{t+s} \delta_{t+s+1}.
    \end{align*}
    Now, we can bound $\norm{\delta_{t+s}}$ by bounding these individual contributions:
    \[
        \norm{\delta_{t+s}} \leq \norm{\underbrace{\varepsilon_{t+s} \lambda_{t+s+1}}_{a'}} + \norm{\underbrace{\nabla_x f_{t+s} \delta_{t+s+1}}_{b'}} + \norm{\underbrace{\varepsilon_{t+s} \delta_{t+s+1}}_{c'}}.
    \]
    We start with $\norm{a'}$:
    \begin{alignat*}{3}
        &&&&\smax(\varepsilon_{t+s}) &\,\leq\,\frac{\gamma}{3^s} \kappa^{-1}(\nabla_u f_t)\prod_{i=1}^{s-1}\kappa^{-1}(\nabla_x f_{t+i})\smin(\nabla_x f_{t+s})\tag{\cref{ass:xerror}}\\
        &&\implies\quad &&\norm{\underbrace{\varepsilon_{t+s}\lambda_{t+s+1}}_{a'}} &\,\leq\,\frac{\gamma}{3^s} \kappa^{-1}(\nabla_u f_t)\prod_{i=1}^{s-1}\kappa^{-1}(\nabla_x f_{t+i})\norm{\underbrace{\nabla_x f_{t+s}\lambda_{t+s+1}}_{\lambda_{t+s}}}.\tag{\cref{lem:sigma}}
    \end{alignat*}
    Now, $\norm{b'}$:
    \begin{alignat*}{3}
        &&&&\norm{\delta_{t+s+1}} &\,\leq\,\frac{\gamma}{3^{s}}\kappa^{-1}(\nabla_u f_t)\prod_{i=1}^{s}\kappa^{-1}(\nabla_x f_{t+i})\norm{\lambda_{t+s+1}}\tag{Induction hypothesis}\\
        &&\iff\quad &&\smax(\nabla_x f_{t+s})\norm{\delta_{t+s+1}} &\,\leq\,\frac{\gamma}{3^{s}}\kappa^{-1}(\nabla_u f_t)\prod_{i=1}^{s-1}\kappa^{-1}(\nabla_x f_{t+i})\smin(\nabla_x f_{t+s})\norm{\lambda_{t+s+1}}\tag{Definition of $\kappa^{-1}$}\\
        &&\implies\quad &&\norm{\underbrace{\nabla_x f_{t+s}\delta_{t+s+1}}_{b'}} &\,\leq\,\frac{\gamma}{3^{s}}\kappa^{-1}(\nabla_u f_t)\prod_{i=1}^{s-1}\kappa^{-1}(\nabla_x f_{t+i})\norm{\underbrace{\nabla_x f_{t+s}\lambda_{t+s+1}}_{\lambda_{t+s}}}. \tag{\cref{lem:sigma}}
    \end{alignat*}
    And finally, $\norm{c'}$:
    \begin{alignat}{3}
        &&&&\smax(\varepsilon_{t+s}) &\,\leq\,\smin(\nabla_x f_{t+s})\,\leq\,\smax(\nabla_x f_{t+s})\label{eq:eps_smax}\\
        &&\implies\quad &&\smax(\varepsilon_{t+s})\norm{\delta_{t+s+1}} &\,\leq\,\frac{\gamma}{3^{s}}\kappa^{-1}(\nabla_u f_t)\prod_{i=1}^{s}\kappa^{-1}(\nabla_x f_{t+i})\smax(\nabla_x f_{t+s})\norm{\lambda_{t+s+1}}\tag{Induction hypothesis}\\
        &&\iff\quad &&\smax(\varepsilon_{t+s})\norm{\delta_{t+s+1}} &\,\leq\,\frac{\gamma}{3^{s}}\kappa^{-1}(\nabla_u f_t)\prod_{i=1}^{s-1}\kappa^{-1}(\nabla_x f_{t+i})\smin(\nabla_x f_{t+s})\norm{\lambda_{t+s+1}}\tag{Definition of $\kappa^{-1}$}\nonumber\\
        &&\implies\quad &&\norm{\underbrace{\varepsilon_{t+s}\delta_{t+s+1}}_{c'}} &\,\leq\,\frac{\gamma}{3^{s}}\kappa^{-1}(\nabla_u f_t)\prod_{i=1}^{s-1}\kappa^{-1}(\nabla_x f_{t+i})\norm{\underbrace{\nabla_x f_{t+s}\lambda_{t+s+1}}_{\lambda_{t+s}}}.\tag{\cref{lem:sigma}}\nonumber
    \end{alignat}
    Here, \cref{eq:eps_smax} follows from \cref{ass:xerror} by noting that that the constant before $\smin(\nabla_x f_{t+s})$ on the right-hand side is not greater than $1$. We can now put all three bounds together to give us \cref{eq:delta}:
    \[
        \norm{\delta_{t+s}} \leq \norm{a'} + \norm{b'} + \norm{c'} \leq 3\cdot\frac{\gamma}{3^s}\kappa^{-1}(\nabla_u f_t)\prod_{i=1}^{s-1}\kappa^{-1}(\nabla_x f_{t+i})\norm{\lambda_{t+s}}.
    \]
    
    Equipped with a bound on $\delta_{t+s}$, we are ready to bound $|b|$ and $|c|$.
    Starting with $|b|$, we have:
    \begin{alignat}{4}
        &&&&\norm{\delta_{t+1}} &\,\leq\,\gamma\kappa^{-1}(\nabla_u f_t)\norm{\lambda_{t+1}}&&\qquad \text{(Eq.~\cref{eq:delta} for $s = 1$)}\nonumber\\
        &&\iff\quad &&\smax(\nabla_u f_t)\norm{\delta_{t+1}} &\,\leq\,\gamma\smin(\nabla_u f_t)\norm{\lambda_{t+1}}&&\qquad \text{(Definition of $\kappa^{-1}$)}\nonumber\\
        &&\implies\quad &&\norm{\nabla_u f_t \delta_{t+1}} &\,\leq\,\gamma\norm{\nabla_u f_t \lambda_{t+1}}&&\qquad \text{(\cref{lem:sigma})}\label{eq:gamma}\\
        &&\implies\quad &&|\underbrace{\delta_{t+1}^\top \nabla_u f_t^\top \nabla_u f_t \lambda_{t+1}}_b| &\,\leq\,\gamma\norm{\nabla J_t}^2.&&\qquad \text{(\cref{lem:cauchy})}\nonumber
    \end{alignat}
    And finally, we can bound $|c|$:
    \begin{alignat}{4}
        &&&&\smax(\varepsilon_t') \,\leq\,\zeta\smin(\nabla_u f_t)&\,\leq\,\zeta\smax(\nabla_u f_t)&&\qquad \text{(\cref{ass:uerror})}\nonumber\\
        &&\implies\quad &&\smax(\varepsilon_t')\kappa^{-1}(\nabla_u f_t) \norm{\lambda_{t+1}}&\,\leq\,\zeta\smin(\nabla_u f_t)\norm{\lambda_{t+1}}&&\qquad \text{(Definition of $\kappa^{-1}$)}\nonumber\\
        &&\implies\quad &&\smax(\varepsilon_t')\norm{\delta_{t+1}}&\,\leq\,\gamma\zeta\smin(\nabla_u f_t)\norm{\lambda_{t+1}}&&\qquad \text{(Eq.~\cref{eq:delta} for $s = 1$)}\nonumber\\
        &&\implies\quad &&\norm{\varepsilon_t'\delta_{t+1}}&\,\leq\,\gamma\zeta\norm{\nabla_u f_t\lambda_{t+1}}&&\qquad \text{(\cref{lem:sigma})}\label{eq:gammazeta}\\
        &&\implies\quad &&|\underbrace{\delta_{t+1}^\top \varepsilon_t'^\top \nabla_u f_t \lambda_{t+1}}_c| &\,\leq\,\gamma\zeta\norm{\nabla J_t}^2.&&\qquad \text{(\cref{lem:cauchy})}\nonumber
    \end{alignat}
    This concludes the proof of the first inequality showing that
    \[
        g_t^\top \nabla J_t \geq \mu\norm{\nabla J_t}^2.
    \]
    The second inequality,
    \[
        \norm{g_t} \leq \nu\norm{\nabla J_t}\text,
    \]
    follows easily from the work we have already done. To show this, we start by decomposing $g_t$:
    \begin{align*}
        g_t &= B_t \tilde\lambda_{t+1}\\
        &= (\nabla_u f_t + \varepsilon'_t)(\lambda_{t+1} + \delta_{t+1})\\
        &= \nabla J_t + \nabla_u f_t \delta_{t+1} + \varepsilon_t' \lambda_{t+1} + \varepsilon_t' \delta_{t+1}.
    \end{align*}
    To bound the norm of $g_t$, we again make use of the triangle inequality:
    \begin{align*}
        \norm{g_t} &\leq \norm{\nabla J_t} + \norm{\nabla_u f_t \delta_{t+1}} + \norm{\varepsilon_t' \lambda_{t+1}} + \norm{\varepsilon_t' \delta_{t+1}}\\
        &\leq (1 + \gamma + \zeta + \gamma\zeta)\norm{\nabla J_t}\\
        &= \nu\norm{\nabla J_t}\text,
    \end{align*}
    where we have used \cref{eq:gamma,eq:zeta,eq:gammazeta}.
\end{proof}

\paragraph{Proof of \cref{thm:main}.}
Let $N \in \mathbb N$ and $t \in [T-1]_0$ be fixed.
In \cref{alg:model-based}, the iterates are computed, for all $k \in [N-1]_0$, as
\begin{equation*}
    u_t^{(k + 1)} = u_t^{(k)} + \eta g_t^{(k)}\text,
\end{equation*}
where $g_t^{(k)}$ is the approximate gradient at iteration $k$.
We denote the true gradient at iteration $k$ by $\nabla J_t^{(k)}$.
From the $L$-smoothness of the objective function (\cref{ass:smooth}), it follows that
\begin{alignat*}{3}
    &&J(u_{0:T-1}^{(k + 1)})&\,\geq\,J(u_{0:T-1}^{(k)}) + \nabla_{u_t}J(u_{0:T-1}^{(k)})^\top (u_t^{(k + 1)} - u_t^{(k)}) - \frac{L}{2} \norm{u_t^{(k + 1)} - u_t^{(k)}}^2&&\\
    &&&\,=\,J(u_{0:T-1}^{(k)}) + {\nabla J_t^{(k)}}^\top (\eta g_t^{(k)}) - \frac{L}{2} \norm{\eta g_t^{(k)}}^2&&\\    &&&\,\geq\, J(u_{0:T-1}^{(k)}) + \eta\mu\norm{\nabla J_t^{(k)}}^2 - \frac{\eta^2 L\nu^2}{2} \norm{\nabla J_t^{(k)}}^2&&\hspace{-5em}\text{(\cref{thm:big})}\\
    &&&\,=\,J(u_{0:T-1}^{(k)}) + \eta\bigg(\underbrace{\mu - \frac{\eta L\nu^2}{2}}_{\alpha}\bigg) \norm{\nabla J_t^{(k)}}^2.&&
\end{alignat*}
\Cref{thm:main} demands that $\eta > 0$ is set small enough such that $\alpha > 0$, which is possible because $0 < \mu < \nu$ and $L > 0$.
Thus, we get
\begin{alignat*}{4}
    &&&&\eta\alpha \norm{\nabla J_t^{(k)}}^2&\,\leq\,J(u_{0:T-1}^{(k + 1)}) - J(u_{0:T-1}^{(k)})&&\\
    &&\implies\quad&&\frac{1}{N}\sum_{k=0}^{N - 1}\norm{\nabla J_t^{(k)}}^2&\,\leq\,\frac{1}{\alpha\eta N}\sum_{k=0}^{N-1}\left\{J(u_{0:T-1}^{(k + 1)}) - J(u_{0:T-1}^{(k)})\right\}&&\\
    &&&&&\,=\,\frac{1}{\alpha\eta N}\left\{J(u_{0:T-1}^{(N)}) - J(u_{0:T-1}^{(0)})\right\}&&\\
    &&&&&\,\leq\,\frac{J^\star - J(u_{0:T-1}^{(0)})}{\alpha\eta N},&&
\end{alignat*}
where $J^\star \doteq \sup_{u\in \mathcal U^T}J(u)$ is the optimal value of the initial state.
\qed}

\iftoggle{noappendix}{
\refstepcounter{section}\label{app:pendulum}
}{
\section{Inverted pendulum swing-up task}\label{app:pendulum}
We give a brief description of the inverted pendulum system on which we evaluate our algorithms in \cref{sec:pendulum}.
The setup is shown in \cref{fig:pendulum}.
The state at time $t$ is $x_t = (\ell, \dot \ell, \theta, \dot \theta)_t \in \mathbb R^4$, where $\ell$ is the position of the cart on the bar and $\theta$ is the (signed) pendulum angle.
The action $u_t$ is the horizontal force $F$ (in units of $\SI{50}{\newton}$) applied to the cart at time $t$.
Episodes are of length $T = 100$ ($=\SI{1}{s}$), the running reward $r(x, u) = -0.001 u^2$ penalizes large forces, and the terminal reward $r_T(x) = -\norm{x}_1$ defines the goal state to be at rest in the upright position.
The system has five parameters: the mass of the cart ($m_1 = \SI{1}{kg}$), the mass of the pendulum tip ($m_2 = \SI{0.1}{kg}$), the length of the pendulum ($m_3 = \SI{0.5}{m}$), the friction coefficient for linear motion ($m_4 = \SI{0.01}{\newton\second\per\meter}$), and the friction coefficient for rotational motion ($m_5 = \SI{0.01}{\newton\meter\second\per\radian}$).
These are the model parameters that are randomly sampled to test the robustness of our model-based algorithm in \cref{sec:pendulum}.
We say that the swing-up task is \emph{solved} if $J_\mathrm{max} > -0.03$.
This threshold was determined empirically.
If the algorithm or the model is randomized, then $[J_\mathrm{max} > -0.03]$ is a Bernoulli random variable whose mean, which we call the \emph{solve rate}, depends on the quality of the learning algorithm.}

\iftoggle{noappendix}{
\refstepcounter{section}\label{app:mjc-long}
}{
\section{Further experiments}\label{app:mjc-long}
In this section, we repeat the MuJoCo experiments of \cref{sec:mjc} with longer time-horizons $T$.
The results are shown in \cref{fig:mjc-long}.
Our algorithms are sensitive to the horizon $T$ due to the backpropagation of the costates.
At each propagation step, the approximation errors of the Jacobians amplify the errors of the costates.
For this reason, \cref{thm:main} demands (through \cref{ass:error}) more accurate Jacobians at later time steps.
Thus, for large $T$, our convergence result requires more accurate Jacobian estimates.
However, in \cref{fig:mjc-long}, we see that \cref{alg:off-policy} is able to cope with longer horizons for the two MuJoCo environments.
The reason for this discrepancy between our theoretical and empirical result is that \cref{thm:main} does not consider the stability of the system under consideration.
The two MuJoCo systems, \texttt{Ant-v4} and \texttt{HalfCheetah-v4}, are stable along the trajectories encountered during training, which prevents an exponential build-up of error in the costate propagation.\looseness=-1

\begin{figure}[t]
    \centering
    \includegraphics[width=\linewidth]{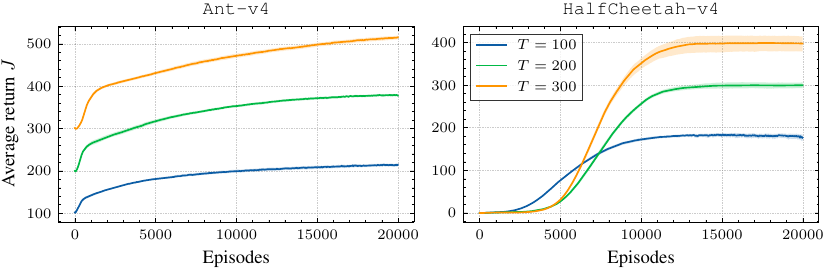}
    \caption{Learning curves of our off-trajectory algorithm. All experiments were repeated with $20$ random seeds, and we show $95\%$-bootstrap confidence intervals for the average return.}
    \label{fig:mjc-long}
    \vspace{-1em}
\end{figure}}

\iftoggle{noappendix}{
\refstepcounter{section}\label{app:baselines}
}{
\section{Baselines}\label{app:baselines}
We compare our algorithms against two baselines: the finite-difference approach discussed at the end of \cref{sec:on-policy} and the gradient-free cross-entropy method \citep[CEM;][]{rubinstein1999cross}.
These methods are listed in \cref{alg:fd,alg:cem}.
In both algorithms, we perform $M \in \mathbb N$ rollouts of perturbed action sequences $\{\bm u_i \sim \mathcal N(\bar{\bm u}, \sigma I)\}_{i=1}^M$.
Here, $\bar{\bm u}$ is the current action sequence and $\sigma > 0$ is a noise scale parameter.
In CEM, we then construct the \emph{elite} set $S$ of the $L < M$ perturbed action sequences with the highest returns, where $L \in \mathbb N$ is a hyperparameter.
Finally,  the current action sequence $\bar{\bm u}$ is updated to be the mean of the elite sequences, such that $\bar{\bm u} \gets \frac{1}{L}\sum_{\bm u \in S} \bm u$.

While the gradient-free nature of this method can make it more efficient than the finite-difference approach, it still suffers from the same fundamental deficiency: it ignores the Markovian structure of the RL problem and treats the objective function $J$ as a black box.
CEM is commonly used in model-based closed-loop reinforcement learning for planning.
In this setting, the rollouts are hallucinated using the approximate model.
Instead of executing the complete open-loop trajectory, the model-predictive control framework is typically employed.
The planning procedure is repeated after each step in the real environment with the executed action being the first item in the planned action sequence.
Thus, this setting is very different from our open-loop RL objective.
For this reason, we slightly modify the CEM algorithm to better fit our requirements.
In model-based RL, typically both mean $\bar u$ and standard variation $\sigma$ are adapted in CEM \citep{hafner2019planet,pinneri2021sample}.
In our experiments, this approach led to very fast convergence ($\sigma \to 0$) to suboptimal trajectories.
We thus only fit the mean and keep the noise scale fixed, which we empirically observed to give much better results.

\begin{algorithm2e}[t]
  \caption{Finite-difference method}\label{alg:fd}
  \KwIn{Number of rollouts $M \in \mathbb N$, noise scale $\sigma > 0$, step size $\eta > 0$}
  Initialize $\bar u_{0:T-1}$ (initial action sequence)\\
  $\bar{\bm u} \gets \operatorname{vec}(\bar u_{0:T-1}) \in \mathbb R^{TK}$\\
  \For{$k = 1, 2, \dots, N$}{
      \tcp{Forward passes}
      $\bar x_{0:T} \gets \operatorname{rollout}(\bar u_{0:T-1})$\\
      $\bar J \gets \sum_{t=0}^{T-1} r(\bar x_t, \bar u_t) + r_T(\bar x_T)$\\
      \For{$i = 1, 2, \dots, M$}{
          $u_{0:T-1} \sim \mathcal N(\bar u_{0:T-1}, \sigma I)$\\
          $x_{0:T} \gets \operatorname{rollout}(u_{0:T-1})$\\
          $\bm u_i \gets \operatorname{vec}(u_{0:T-1}) \in \mathbb R^{TK}$\\
          $J_i \gets \sum_{t=0}^{T-1} r(x_t, u_t) + r_T(x_T)$
      }
      \BlankLine
      \tcp{Gradient estimation}
      $\bm g \gets \argmin_{\bm g \in \mathbb R^{TK}}\sum_{i=1}^M \{J_i - \bar J - \bm g^\top (\bm u_i - \bar{\bm u})\}^2$\\
      \BlankLine
      \tcp{Gradient ascent}
      $\bar{\bm u} \gets \bar{\bm u} + \eta \bm g$\\
      $\bar u_{0:T-1} \gets \operatorname{reshape}(\bar{\bm u}) \in \mathbb R^{T\times K}$
  }
\end{algorithm2e}

\begin{algorithm2e}[t]
  \caption{Cross-entropy method}\label{alg:cem}
  \KwIn{Number of rollouts $M \in \mathbb N$, noise scale $\sigma > 0$, size of elite set $L \in \mathbb N$}
  Initialize $\bar u_{0:T-1}$ (initial action sequence)\\
  $\bar{\bm u} \gets \operatorname{vec}(\bar u_{0:T-1}) \in \mathbb R^{TK}$\\
  \For{$k = 1, 2, \dots, N$}{
      \tcp{Forward passes}
      \For{$i = 1, 2, \dots, M$}{
          $u_{0:T-1} \sim \mathcal N(\bar u_{0:T-1}, \sigma I)$\\
          $x_{0:T} \gets \operatorname{rollout}(u_{0:T-1})$\\
          $\bm u_i \gets \operatorname{vec}(u_{0:T-1}) \in \mathbb R^{TK}$\\
          $J_i \gets \sum_{t=0}^{T-1} r(x_t, u_t) + r_T(x_T)$
      }
      \BlankLine
      \tcp{Elite set computation}
      $S \gets \operatorname{arg\ partition}_L\{(-J_i)_{i=1}^M\}_{1:L}$\\
      \BlankLine
      \tcp{Action sequence update}
      $\bar{\bm u} \gets \frac{1}{L}\sum_{i \in S} \bm u_i$\\
      $\bar u_{0:T-1} \gets \operatorname{reshape}(\bar{\bm u}) \in \mathbb R^{T\times K}$
  }
\end{algorithm2e}}

\iftoggle{noappendix}{
\refstepcounter{section}\label{app:hyper}
}{
\section{Hyperparameters}\label{app:hyper}
Unless stated otherwise, we used the hyperparameters listed in \cref{tbl:hyper-pendulum} in the inverted penulum experiments of \cref{sec:pendulum}, and those listed in \cref{tbl:hyper-mjc} in the MuJoCo experiments of \cref{sec:mjc,app:mjc-long}.
In each experiment, all actions in the initial action trajectory $u_{0:T-1}^{(0)}$ are sampled from a zero-mean Gaussian distribution with standard deviation $0.01$.
We use the Adam optimizer \citep{kingma2014adam} both for training the MLP model and for performing the gradient ascent steps in \cref{alg:model-based,alg:on-policy,alg:off-policy,alg:fd}.
We did not optimize the hyperparameters of soft actor-critic (SAC), but kept the default values suggested by \citet{haarnoja-2018a-soft}, as these are already optimized for the MuJoCo environments.
The entropy coefficient of the SAC algorithm is tuned automatically according to the procedure described by \citet{haarnoja-2018b-soft}.
In our experiments, we make use of the Stable-Baselines3 \citep{stable-baselines3} implementation of SAC.

\begin{table}
   \centering
   \begin{minipage}[t]{.49\textwidth}
     \centering
     \caption{Pendulum experiments hyperparameters}
     \begin{tabular}{lr}
       \toprule
       Parameter & Value\\ 
       \midrule 
       Number of optimization steps $N$ & $50000$\\
       Step size $\eta$ & $0.001$\\
       Noise scale $\sigma$ & $0.001$\\
       Number of perturbed rollouts $M$ & $10$\\
       Forgetting factor $\alpha$ & $0.8$\\
       Initial precision $q_0$ & $0.001$\\
       Cross-entropy method: $M$\tablefootnote{\label{foot:hyper} This value was chosen on the basis of the experiment presented in \cref{fig:rollouts}.} & $20$\\
       Finite-difference method: $M$\textsuperscript{\ref{foot:hyper}} & $20$\\
       Finite-difference method: $\sigma$\textsuperscript{\ref{foot:hyper}} & $0.0001$\\
       MLP model: hidden layers & $[16, 16]$\\
       MLP model: training rollouts & $1000$\\
       MLP model training: epochs & $10$\\
       MLP model training: batch size & $100$\\
       MLP model training: step size & $0.002$\\
       MLP model training: weight decay & $0.001$\\
       \bottomrule\\
       \end{tabular}
       \label{tbl:hyper-pendulum}
   \end{minipage}\hfill
   \begin{minipage}[t]{.49\textwidth}
     \centering
     \caption{MuJoCo experiments hyperparameters}
     \begin{tabular}{lr}
       \toprule
       Parameter & Value\\ 
       \midrule 
       Number of optimization steps $N$ & $20000$\\
       Step size $\eta$ & $0.0001$\\
       Noise scale $\sigma$ & $0.03$\\
       Initial precision $q_0$ & $0.0001$\\
       \midrule
       Forgetting factor $\alpha$ & \\
       \hspace{1em}\texttt{HalfCheetah-v4}, $T = 100$ & $0.9$\\
       \hspace{1em}\texttt{HalfCheetah-v4}, $T = 200$ & $0.8$\\
       \hspace{1em}\texttt{HalfCheetah-v4}, $T = 300$ & $0.8$\\
       \hspace{1em}\texttt{Ant-v4}, $T = 100$ & $0.95$\\
       \hspace{1em}\texttt{Ant-v4}, $T = 200$ & $0.9$\\
       \hspace{1em}\texttt{Ant-v4}, $T = 300$ & $0.85$\\
       \bottomrule\\
       \end{tabular}
       \label{tbl:hyper-mjc}
   \end{minipage}
   \vspace{-\baselineskip}
 \end{table}

For our off-trajectory method, we found it worthwile to tune the forgetting factor $\alpha$ to the specific task at hand.
Large $\alpha$ means that data is retained for longer, which both makes the algorithm more sample efficient (i.e., faster convergece) and the Jacobian estimates more biased (i.e., convergence to a worse solution).
In \cref{fig:mjc-alpha}, we show this trade-off in the learning curves for the MuJoCo tasks (with the horizon $T = 200$).
We found that the performance is much less senstitive to the choice of noise scale $\sigma$ and initial precision $q_0$ than to the choice of the forgetting factor $\alpha$.
\begin{figure}[h]
  \centering
  \includegraphics[width=\linewidth]{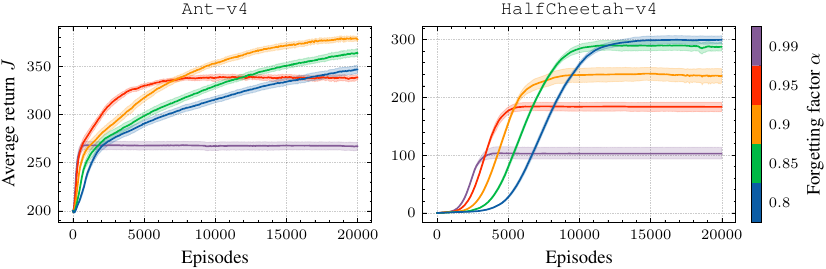}
  \caption{Analysis of the influence of the forgetting factor $\alpha$ on the performance of the off-trajectory method (\cref{alg:off-policy}) in the MuJoCo environments ($T = 200$). All experiments were repeated with $20$ random seeds, and we show $95\%$-bootstrap confidence intervals for the average return.}
  \label{fig:mjc-alpha}
  \vspace{-\baselineskip}
\end{figure}}
\end{document}